\documentclass[journal]{IEEEtran}

\usepackage{amsmath,amsfonts,amssymb,amsthm}
\usepackage{graphicx}
\usepackage{algorithm}
\usepackage{algpseudocode}
\usepackage{booktabs}
\usepackage[caption=false,font=footnotesize]{subfig}
\usepackage{multirow}
\usepackage{array}
\usepackage{tabularx}
\usepackage{url}
\usepackage{cite}
\usepackage{tikz}
\usetikzlibrary{arrows.meta,positioning,fit,backgrounds,calc}
\usepackage[hidelinks]{hyperref}
\hypersetup{
	pdftitle={Rank-Reliable Teacher-Guided Fitness Approximation for Expensive Evolutionary Optimization: A TinyML Architecture Search Study},
	pdfauthor={Suman Samui},
	pdfsubject={Teacher-guided low-fidelity evaluation for constrained TinyML architecture search},
	pdfkeywords={combinatorial optimization, fitness approximation, knowledge distillation, neural architecture search, rank correlation, TinyML}
}

\newtheorem{theorem}{Theorem}
\newtheorem{lemma}{Lemma}
\newtheorem{proposition}{Proposition}
\newtheorem{definition}{Definition}
\newtheorem{assumption}{Assumption}
\newtheorem{remark}{Remark}

\newcommand{\R}{\mathbb{R}}
\newcommand{\E}{\mathbb{E}}
\newcommand{\Var}{\operatorname{Var}}
\newcommand{\Aspace}{\mathcal{A}}
\newcommand{\Dpool}{\mathcal{D}}
\newcommand{\Dtr}{\mathcal{D}_{\mathrm{tr}}}
\newcommand{\Dev}{\mathcal{D}_{\mathrm{ev}}}
\newcommand{\sEval}{\sigma_{\mathrm{ev}}}
\newcommand{\rstr}{\rho_{\mathrm{str}}}
\newcommand{\CV}{\mathrm{CV}}

\usepackage[compact]{titlesec}
\titlespacing*{\section}{0pt}{1.1ex plus .2ex minus .2ex}{0.7ex plus .1ex}
\titlespacing*{\subsection}{0pt}{0.9ex plus .2ex minus .2ex}{0.5ex plus .1ex}
\titlespacing*{\subsubsection}{0pt}{0.7ex plus .1ex}{0.4ex plus .1ex}

\makeatletter
\def\thm@space@setup{\thm@preskip=2pt plus 1pt \thm@postskip=2pt plus 1pt}
\makeatother

\begin{document}
	
	\title{Rank-Reliable Teacher-Guided Fitness Approximation for
		Expensive Evolutionary Optimization:
		A TinyML Architecture Search Study}
	
	\author{Soumen Garai and Suman Samui, \IEEEmembership{Member, IEEE}
		\thanks{S. Garai and S. Samui are with the Department of Electronics and Communication Engineering, National Institute of Technology Durgapur, West Bengal 713209, India (e-mail: soumengoroi@gmail.com; ssamui.ece@nitdgp.ac.in). Manuscript submitted to the \emph{IEEE Transactions on Evolutionary Computation} Special Issue on ``Evolutionary Computation Meets Machine Learning for Combinatorial Optimization.''}}

	
	\markboth{IEEE Journal Preprint}{Rank-Reliable Teacher-Guided Fitness Approximation for Expensive Evolutionary Optimization}
	
	\maketitle
	
	\begin{abstract}
		Expensive evolutionary search does not always need an exact fitness estimate for every candidate. It often needs a reliable answer to a simpler question: which candidate is better? We address this need through Teacher-Guided Learning NSGA-II (TGL-NSGA-II), a low-fidelity framework for constrained Tiny Machine Learning (TinyML) neural architecture search. A pretrained teacher organizes samples into strata defined jointly by difficulty and class. Each candidate then undergoes KD-Lite, a short and capped knowledge-distillation procedure on a compact training set, before being scored on a separate stratified evaluation set. This teacher-guided score is fused with a Gaussian-process surrogate to select candidates for full evaluation. For a fixed candidate population, we analyse evaluation variance, score concentration, pairwise rank inversion, expected Kendall-$\tau$, first-front identification, and hypervolume perturbation. We also derive a variance-aware fusion weight and a capacity-adaptive distillation rule. On keyword spotting and bird-call classification, the measured Kendall-$\tau$ values are $0.74$ and $0.62$, exceeding the corresponding predicted lower bounds of $0.60$ and $0.46$. Joint stratification reduces proxy-score variance by $41\%$ relative to random evaluation. Selective teacher mismatch, in contrast, increases differential bias and reduces Kendall-$\tau$ to $0.41$. Under a constrained evaluation budget, TGL-NSGA-II achieves the largest mean hypervolume and smallest generational distance on keyword spotting, records the lowest mean false-positive rate on BirdCLEF, and runs $2.2\times$ faster than full NSGA-II. These guarantees apply to population-level low-fidelity evaluation and do not establish convergence of the complete evolutionary trajectory.
	\end{abstract}
	
	\begin{IEEEkeywords}
		Combinatorial optimization, constrained multi-objective optimization, curriculum learning, expensive optimization, fitness approximation,knowledge distillation (KD), neural architecture search, TinyML.
	\end{IEEEkeywords}
	\small{
		
		\section{Introduction}
		
		\IEEEPARstart{M}{any} practically important optimization problems are simultaneously
		combinatorial, constrained, and expensive to evaluate. Their decision variables are
		discrete, the search space can grow exponentially with problem size, multiple
		conflicting objectives must often be considered, and hard constraints may render
		large portions of the search space infeasible. The difficulty is further compounded
		when evaluating a candidate requires an expensive simulation, physical measurement,
		or model-training procedure. Evolutionary computation is well suited to such
		settings because it can explore discrete design spaces while maintaining multiple
		trade-off solutions. When objective evaluations are costly, surrogate-assisted
		evolutionary algorithms (SAEAs) can reduce this burden by using a learned surrogate
		to approximate expensive evaluations and reserving the true evaluation budget for
		selected candidates. However, as highlighted by Liu et al.~\cite{liu2024ecopsurvey},
		most SAEA research has focused on continuous optimization, whereas expensive
		combinatorial optimization problems (ECOPs) have received comparatively less
		attention despite their practical relevance.
		
		This work considers a representative ECOP from Tiny Machine Learning (TinyML),
		where deep-learning models must operate under the stringent resource constraints
		of microcontroller-class hardware~\cite{lin2023tinyml}. Specifically, we study
		neural architecture search (NAS) as a discrete multi-objective constrained
		optimization problem. An architecture is specified by choices such as the number
		of layers, layer widths, kernel sizes, normalization options, and quantization
		configuration, yielding a rapidly growing discrete search space. Candidate
		architectures must balance competing objectives such as predictive performance,
		memory usage, latency, and energy consumption. At the same time, SRAM and flash
		limits impose hard feasibility constraints: an architecture that exceeds the
		available hardware resources cannot be deployed, irrespective of its predictive
		performance. The problem is also expensive because reliably assessing predictive
		performance requires training the candidate architecture. These characteristics
		make population-based multi-objective evolutionary search particularly suitable;
		in this work, NSGA-II~\cite{deb2002nsga2} is used to maintain a diverse set of
		nondominated candidate architectures rather than reducing the objectives to a
		single predetermined scalarization.
		
		An important feature of this problem is the asymmetry in evaluation cost. Resource
		quantities such as memory, latency, and energy can be obtained efficiently from
		the architecture encoding and a calibrated hardware cost model~\cite{burrello2023constraints},
		whereas estimating predictive performance requires candidate training and therefore
		dominates the evaluation cost. Consequently, the central challenge is not to
		approximate every objective equally, but to obtain an inexpensive yet sufficiently
		reliable estimate of the expensive performance objective. This observation
		motivates the teacher-guided low-fidelity evaluation developed in this work.
		
		\subsection{Capacity Mismatch in Low-Fidelity Ranking}
		
		NSGA-II uses objective values in two different ways. Nondominated sorting compares candidates and asks whether one dominates another. Crowding distance, however, also depends on the numerical spacing between objective values and may change under a nonlinear transformation even when the order is preserved. Our theory therefore focuses on the comparison-based part of selection. A candidate-independent additive bias cancels in pairwise comparisons, whereas architecture-dependent bias can reverse them. Distortion of crowding distance is treated separately as a limitation. Existing studies already recognize the importance of ranking. Zero-cost proxies are commonly evaluated using Spearman correlation \cite{abdelfattah2021zerocost}, reduced-training studies report Kendall-$\tau$ retention \cite{kyriakides2020reduced,ning2021evaluating}, and several recent surrogates predict order directly \cite{jiang2025spnas,xue2024tcmr,tian2023pairwise}. The remaining gap appears one step earlier. The low-fidelity evaluations used to train these surrogates are usually accepted as given, and their ranking quality is measured only after they have been constructed. We are not aware of a low-fidelity mechanism designed with an explicit a priori bound on pairwise inversion and then linked to perturbation of the Pareto front.
		
		A naive low-fidelity evaluation can fail in a predictable way. When every candidate is trained briefly on a small subset, the resulting error is rarely uniform. Small candidates may underfit and appear weaker than they are, while large candidates may adapt quickly and appear stronger than they are. The proxy error then becomes correlated with model capacity, which is itself a variable explored by the search. As a result, the ranking is not simply noisy; it can be systematically tilted.
		
		We address this problem with a teacher-guided evaluator that uses knowledge distillation only as a short and inexpensive measurement procedure. We call it \emph{KD-Lite}: ``KD'' refers to knowledge distillation \cite{hinton2015kd}, while ``Lite'' emphasizes that each candidate receives only a small, capped number of updates on compact data. The complete Teacher-Guided Learning (TGL) score combines KD-Lite with stratified evaluation. A Gaussian process (GP) then contributes information from architectures that have already been fully evaluated. The method follows four connected steps. First, a pretrained teacher with Monte Carlo dropout \cite{gal2016dropout} uses confidence, margin, and entropy \cite{weinshall2018curriculum} to organize samples by difficulty. Second, two disjoint compact sets are drawn from strata defined by both difficulty and class. Third, KD-Lite trains each candidate from easy to hard, using a balance between task loss and distillation that may vary with capacity (Proposition~\ref{prop:alpha}). Finally, the TGL and GP scores are fused using residual statistics rather than a manually chosen schedule (Proposition~\ref{prop:beta}). The fused score determines which offspring receive expensive full evaluation.
		
		\subsection{Contributions}
		
		These observations lead to the following contributions.
		
		\begin{enumerate}
			\item The low-fidelity evaluation problem is reformulated as a \emph{ranking-reliability} task, which is delineated from the already well-established problem of learning a rank-aware surrogate model.
			\item TGL-NSGA-II and its low-fidelity evaluator, KD-Lite, are introduced. The latter employs short curriculum-based knowledge distillation on a compact stratified training set, while an independent stratified set is reserved for evaluation. The resulting teacher-guided score is then fused with a Gaussian-process surrogate to determine which candidates are subjected to expensive full evaluation.
			\item A population-level proof chain is established for a \emph{fixed candidate set}. It is shown that stratified evaluation reduces sampling variance; from this, a score-concentration bound is derived. This concentration result is then leveraged to yield pairwise inversion and expected Kendall-$\tau$
			bounds, which in turn are used to control first-front identification. Additionally, the perturbation in hypervolume induced by substituting true accuracy with proxy accuracy on the same candidate set is bounded.
			\item Two implementation rules are derived. The first rule provides a clipped, variance-aware fusion coefficient for the surrogate and proxy following the centering of their respective errors. The second rule furnishes a first-order capacity-adaptive distillation mechanism, which removes the component of signed proxy bias linearly correlated with model capacity, provided that the required coefficient is feasible.
			Furthermore, the scope of the theoretical guarantees is explicitly stated. These guarantees analyse low-fidelity evaluation and population-level selection reliability.
		\end{enumerate}

		The remainder of the paper develops this argument in stages. Section~\ref{sec:related} positions the work in the literature, Section~\ref{sec:problem} defines the constrained search problem, and Section~\ref{sec:method} presents the algorithm. Section~\ref{sec:theory} develops the ranking analysis, after which the experiments and discussion connect the theoretical quantities to the observed search behaviour.
		
		\section{Related Work and Research Gap}
		\label{sec:related}
		
		The proposed method lies at the intersection of three research threads: surrogate-assisted combinatorial optimization, low-fidelity performance estimation, and teacher-guided learning for constrained TinyML. Reviewing them together makes the remaining gap clear.
		
		\textbf{Expensive combinatorial optimization and surrogate-assisted NAS.}
		The closest problem setting is expensive combinatorial optimization. Liu et al.\ \cite{liu2024ecopsurvey} survey SAEAs for ECOPs and show that discrete problems remain less developed than the mature continuous branch \cite{cui2022saeo}. Han and Wang \cite{han2021rfea} make an observation that is central to this work: when a surrogate approximates a constraint, its errors can misguide the search. This is fundamentally a ranking failure rather than only a value-accuracy failure. Gu et al.\ address expensive constrained discrete problems \cite{gu2021discrete} and large-scale binary optimization \cite{gu2025sans} through surrogate and resource-allocation design. Architecture-search research has similarly moved from value prediction toward order prediction, including dominance-relation classifiers \cite{pan2019classification}, pairwise comparators \cite{tian2023pairwise}, order-preserving score predictors \cite{jiang2025spnas}, and group-rank models with online refinement \cite{xue2024tcmr}. Other work improves sample efficiency using architecture embeddings \cite{fan2022saenasne} or meta-learned few-epoch screening \cite{li2025metaknowledge}. Model-based approaches such as ParEGO \cite{knowles2006parego}, MOEA/D-EGO \cite{zhang2010moeadego}, and GP-UCB acquisition \cite{srinivas2010gpucb} fit Gaussian processes to evaluations collected during the search. Lyu et al.\ \cite{lyu2024surrogate} address surrogate cold start using transfer stacking and knee-region distillation, while learned models have also been combined with NSGA-II for CNN tuning \cite{gayibov2025interactive}. These methods carefully model ranking at the surrogate stage, but they generally accept the low-fidelity evaluations used to train the surrogate as given.
		
		\textbf{Low-fidelity estimation.}
		Within this broader literature, low-fidelity estimators are usually assessed empirically. Zero-cost proxies report Spearman $\rho\!\approx\!0.82$ \cite{abdelfattah2021zerocost}, and reduced-epoch training reports Kendall $\tau_b\!>\!0.7$ \cite{kyriakides2020reduced}. However, their bias and variance are typically analysed only after the estimator has been chosen \cite{ning2021evaluating}. Training-speed estimators \cite{ru2021speedy} and neural-tangent-kernel estimators at initialization \cite{mok2022ntk} follow the same pattern. Zhao et al.\ \cite{zhao2023dele} further show that unsuitable low-fidelity information can actively reduce predictor quality in a manner that depends on the search space. The remaining gap is therefore narrow but important: low-fidelity labels are rarely constructed with an a priori inversion bound, and we did not find a result that propagates their comparison error to perturbation of a population-level Pareto front.
		
		\textbf{Curriculum, distillation, and constrained TinyML search.}
		Curriculum learning and knowledge distillation provide two further pieces of this study. Curriculum theory predicts faster convergence under a useful ordering \cite{weinshall2018curriculum}, although its generalization benefit is conditional rather than universal \cite{arora2025whencl,saglietti2022analytical}. The idea can be viewed as numerical continuation \cite{allgower2003continuation} and dates to Bengio et al.\ \cite{bengio2009curriculum}; progressive distillation can also induce an implicit curriculum \cite{panigrahi2024progressive}. Distillation theory studies supervision complexity \cite{harutyunyan2023supervision} and the limits of matching a teacher \cite{stanton2021doeskd}. Curriculum and distillation have been combined in several forms \cite{wang2022tc3kd,zhao2021instance,panagiotatos2019curriculum,das2026curriculum}, including multi-objective losses \cite{hayder2026dtokd,tian2021modiv} and representational alignment through CKA \cite{kornblith2019cka}. These methods aim to train a better final student. In contrast, we use short distillation as a measuring instrument that feeds an evolutionary selection operator. For deployment constraints, constrained multi-objective evolutionary algorithms offer adaptive penalties \cite{deb2000constraint}, stochastic ranking \cite{runarsson2000stochastic}, and $\varepsilon$-relaxation \cite{takahama2010constrained}. We use an adaptive penalty that combines naturally with a scalar promotion score. Similar multi-objective formulations have been effective for constrained embedded design on physiological signals \cite{shaikh2024myoelectric}. Hardware-aware microcontroller NAS is also well developed \cite{burrello2023constraints,deutel2023mobo,gambella2025nachos,bouzidi2024sonata,zhang2020hurricane,saha2026tinytnas}, including surrogate-initialization schemes for tight evaluation budgets \cite{garai2026oasi}. Here again, however, the fitness estimate is usually treated as an input rather than designed for ranking reliability.
		
		Table~\ref{tab:gap} summarizes the specific distinction used in this paper.
		
		\begin{table*}[!t]
			\centering
			\caption{Positioning of the proposed method relative to representative literature groups. HV denotes hypervolume.}
			\label{tab:gap}
			\footnotesize
			\renewcommand{\arraystretch}{1.18}
			\setlength{\tabcolsep}{5pt}
			\begin{tabularx}{\textwidth}{@{}
					>{\raggedright\arraybackslash}X
					>{\centering\arraybackslash}p{2.55cm}
					>{\centering\arraybackslash}p{2.55cm}
					>{\centering\arraybackslash}p{2.55cm}@{}}
				\toprule
				\textbf{Literature group} &
				\shortstack{\textbf{Rank-aware}\\\textbf{by design}} &
				\shortstack{\textbf{A priori}\\\textbf{error bound}} &
				\shortstack{\textbf{Linked to}\\\textbf{HV perturbation}} \\
				\midrule
				Surrogate-assisted evolutionary NAS (ENAS) \cite{jiang2025spnas,xue2024tcmr,tian2023pairwise,pan2019classification}
				& Surrogate only & No & No \\[2pt]
				Low-fidelity and zero-cost proxies \cite{abdelfattah2021zerocost,ning2021evaluating,kyriakides2020reduced,ru2021speedy}
				& Measured post hoc & No & No \\[2pt]
				Curriculum learning with KD \cite{wang2022tc3kd,zhao2021instance,panigrahi2024progressive}
				& Not applicable (training) & Not applicable & No \\[2pt]
				SAEAs for ECOPs \cite{liu2024ecopsurvey,han2021rfea,gu2021discrete,gu2025sans}
				& Surrogate only & No & No \\[2pt]
				Hardware-aware NAS \cite{burrello2023constraints,deutel2023mobo,gambella2025nachos}
				& Assumed input & No & No \\
				\midrule
				\textbf{This work}
				& \textbf{Proxy itself}
				& \textbf{Yes (Theorem~\ref{thm:rank})}
				& \textbf{Yes (Proposition~\ref{prop:hv})} \\
				\bottomrule
			\end{tabularx}
		\end{table*}
		
		Taken together, the literature shows that ranking, low-fidelity evaluation, distillation, and constrained search are individually well studied. What is missing is a direct connection between them. Our contribution is to design the low-fidelity evaluation around explicit analyses of sampling variance and pairwise inversion, and then connect the resulting comparison errors to the first nondominated front of a fixed candidate population. The next section formalizes the optimization problem for which this connection is needed.
		
		\section{Problem Formulation}
		\label{sec:problem}
		
		We now make the setting precise before developing the method. Let $\Aspace$ be a finite architecture space. Each $a\in\Aspace$ is a fixed-length vector of \emph{discrete} design variables: depth, per-layer width, kernel size, normalization flags, dense-layer count and quantization choice. With $L_{\mathrm{arch}}$ layers drawn from $|V|$ per-layer options plus the global choices, $|\Aspace|$ grows as $O(|V|^{L_{\mathrm{arch}}})$, so even the modest space of Table~\ref{tab:config} holds on the order of $10^5$ feasible-in-principle designs. The problem is therefore combinatorial in the usual sense, and expensive in the sense of \cite{liu2024ecopsurvey}: no closed form links the encoding to the accuracy objective, which is available only through a training run. Let $\Dpool$ be the labelled pool. Full training gives $\theta^\star(a)\in\arg\min_\theta\mathcal{L}(\theta;a,\Dpool)$, after which
		\begin{equation}
			\mathbf{f}(a)=\big[e(a),\,M(a),\,L(a),\,E_{\mathrm{inf}}(a)\big]^{\!\top}\in\R^{d_{\mathrm{obj}}},\quad d_{\mathrm{obj}}=4,
		\end{equation}
		with $e(a)\in[0,1]$ the validation error, $M(a)$ the peak memory, $L(a)$ the latency, and $E_{\mathrm{inf}}(a)$ the per-inference energy. Write $u(a)=1-e(a)$.
		
		We use multi-objective neural architecture search (MONAS) to denote architecture search with more than one objective.
		
		\begin{definition}[Constrained MONAS problem]
			\label{def:cmonas}
			$\min_{a\in\Aspace}\mathbf{f}(a)$ subject to $g_j(a)\le0$ for $j=1,\dots,J$, where the core constraints encode resource limits such as $M(a)\le M_{\max}$, $L(a)\le L_{\max}$, and an energy or flash-memory ceiling. In the audio implementation, false-positive rate (FPR) is additionally used as an application-level screening and feasibility metric. Its low-fidelity estimation error is evaluated empirically and is not part of the single-coordinate ranking theorem. The total constraint violation (CV) is
			\begin{equation}
				\CV(a)=\sum\nolimits_{j}\max\{0,\,g_j(a)\},
				\label{eq:cv}
			\end{equation}
			and $a$ is feasible when $\CV(a)=0$. The solution is the feasible Pareto set, scored by the hypervolume of its feasible members \cite{zitzler2003indicators}.
		\end{definition}
		
		For the theoretical analysis, $M$, $L$, and $E_{\mathrm{inf}}$ are treated as known from the encoding and the calibrated cost model, so only the accuracy coordinate is modelled as uncertain. The practical audio implementation additionally predicts FPR during promotion; true FPR is used after full evaluation when reporting feasibility.
		
		\begin{definition}[Proxy, induced order, ranking quality]
			\label{def:rank}
			A proxy is a map $S:\Aspace\to\R$ meant to increase with $u$. For a population $\mathcal{P}$ of size $P$, the pair $(a_i,a_j)$ is \emph{inverted} when $\big(S(a_i)-S(a_j)\big)\big(u(a_i)-u(a_j)\big)<0$. With $D$ the number of inverted pairs,
			\begin{equation}
				\tau(\mathcal{P})=1-2D\big/\tbinom{P}{2}.
				\label{eq:tau-def}
			\end{equation}
		\end{definition}
		
		Equation~\eqref{eq:tau-def} is the ranking measure commonly reported in the proxy literature. Having defined both the constrained search problem and its ranking criterion, we can now construct a low-fidelity evaluator whose design leads to a lower bound on $\E[\tau]$, rather than relying only on post hoc measurement.
		
		\section{Proposed TGL-NSGA-II Framework}
		\label{sec:method}
		
		With the ranking criterion now defined, we turn to the complete TGL-NSGA-II procedure. Its purpose is to reduce expensive full-training evaluations while preserving the comparisons needed by evolutionary selection. The method uses two complementary low-cost signals. KD-Lite supplies a teacher-guided score even when only a few architectures have been fully evaluated. A Gaussian-process (GP) surrogate then learns from those fully evaluated architectures and becomes more informative as the archive grows. Their fused score is used only to decide which offspring should be promoted to expensive full evaluation.
		
		The complete procedure has an offline stage and a generation-wise search stage, as shown in Fig.~\ref{fig:method}. In the offline stage, a fixed pretrained teacher is used only once to estimate the difficulty of the available samples. The data are then divided jointly by difficulty and class, and two disjoint compact sets are drawn: a proxy-training set $\Dtr$ and a proxy-evaluation set $\Dev$. During the evolutionary search, every offspring receives a short teacher-guided evaluation on these compact sets. Only a small fraction of promising or uncertain offspring are then fully trained. Their true objective values are added to the archive $\mathcal{E}$ and are used to update the GP surrogate. In this way, the teacher is most useful when the archive is small, while the surrogate becomes more informative as the search progresses.
		
		\begin{figure}[!htbp]
			\centering
			\begin{tikzpicture}[
				box/.style={draw, rounded corners=1.4pt, align=center, inner sep=2.2pt,
					font=\scriptsize, minimum height=5.0mm, text width=18mm},
				wide/.style={box, text width=41mm},
				half/.style={box, text width=19mm},
				ar/.style={-{Latex[length=1.5mm]}, thin},
				fb/.style={-{Latex[length=1.5mm]}, thin, dashed},
				lbl/.style={font=\scriptsize\itshape, inner sep=1pt}
				]
				\node[lbl] (lab_a) at (0,0) {(a) once per search};
				\node[box, below=1.5mm of lab_a.south west, anchor=north west] (pool) {Pool $\Dpool$};
				\node[box, right=5mm of pool] (teach) {Teacher $T_\phi$\\ MC dropout $\times K$};
				\node[box, below=3.4mm of teach] (score) {difficulty $d_i$\\ from $c_i,m_i,\widetilde H_i$};
				\node[box, left=5mm of score] (strat) {joint strata:\ difficulty $\times$ class};
				\node[wide, below=3.2mm of strat.south west, anchor=north west] (dsm)
				{compact sets: proxy train $\Dtr$; proxy evaluate $\Dev$};
				\draw[ar] (pool) -- (teach);
				\draw[ar] (teach) -- (score);
				\draw[ar] (score) -- (strat);
				\draw[ar] (strat) -- (strat|-dsm.north);
				\begin{scope}[on background layer]
					\node[draw, dashed, rounded corners=2pt, fit=(lab_a)(pool)(teach)(score)(strat)(dsm),
					inner sep=2.4pt] (offline) {};
				\end{scope}
				
				\node[lbl, below=3.0mm of offline.south west, anchor=north west] (lab_b) {(b) each generation};
				\node[box, below=1.5mm of lab_b.south west, anchor=north west] (pop) {population $\mathcal{P}_t$};
				\node[box, right=5mm of pop] (var) {SBX $+$ poly.\ mutation};
				\node[box, right=5mm of var] (off) {offspring $\mathcal{Q}_t$};
				\draw[ar] (pop) -- (var);
				\draw[ar] (var) -- (off);
				
				\node[half, minimum height=13.5mm, below=4.6mm of pop.south west, anchor=north west] (gp)
				{GP surrogate\\ $\widehat{\mathrm{ACC}},\sigma_{\mathrm{gp}},\widehat{\CV}$\\ $\Rightarrow S_{\mathrm{gp}}$};
				\node[half, minimum height=13.5mm, below=4.6mm of off.south east, anchor=north east] (tgl)
				{KD-Lite\\ short curriculum on $\Dtr$\\ evaluate on $\Dev$\\ $\Rightarrow S_{\mathrm{tgl}}$};
				\draw[ar] (off.south) -- ++(0,-2.3mm) -| (gp.north);
				\draw[ar] (off.south) -- ++(0,-2.3mm) -| (tgl.north);
				
				\node[wide, below=4.0mm of gp.south west, anchor=north west] (fuse)
				{normalize $+$ fuse\ $\beta^\star$; add $\zeta_t\widehat{\CV}$};
				\node[wide, below=2.8mm of fuse] (final)
				{promotion score: predicted quality $+$ GP uncertainty};
				\node[half, below=3.4mm of final.south west, anchor=north west] (top)
				{top-$k\%$ $\to$ full training};
				\node[half, right=2.4mm of top] (upd) {update archive $\mathcal{E}$};
				\node[half, right=2.4mm of upd] (env) {constrained sort $+$ crowding};
				\draw[ar] (gp.south) -- (gp.south|-fuse.north);
				\draw[ar] (tgl.south) -- (tgl.south|-fuse.east) -- (fuse.east);
				\draw[ar] (fuse) -- (final);
				\draw[ar] (final.south) -- ++(0,-1.5mm) -| (top.north);
				\draw[ar] (top) -- (upd);
				\draw[ar] (upd) -- (env);
				\begin{scope}[on background layer]
					\node[draw, dashed, rounded corners=2pt,
					fit=(lab_b)(pop)(var)(off)(gp)(tgl)(fuse)(final)(top)(upd)(env),
					inner sep=3.0pt] (online) {};
				\end{scope}
				\draw[fb] (upd.south) -- ++(0,-1.8mm)
				-| ([xshift=-3.0mm]online.west|-upd)
				-- ([xshift=-3.0mm]online.west|-gp) -- (gp.west);
				\draw[ar] (env.south) -- ++(0,-3.6mm)
				-| ([xshift=-7.0mm]online.west|-env)
				-- node[lbl, left=0pt, pos=0.62] {$\mathcal{P}_{t+1}$}
				([xshift=-7.0mm]online.west|-pop) -- (pop.west);
				\draw[fb] (dsm.east) -- ([xshift=6.5mm]dsm.east) |- (tgl.east);
			\end{tikzpicture}
			\caption{Overview of TGL-NSGA-II. The teacher is used once to score sample difficulty and to form the disjoint proxy-training and proxy-evaluation sets. In each generation, every offspring receives a low-cost KD-Lite score and a GP-surrogate score. Their fused score, together with GP uncertainty, determines which offspring are promoted to expensive full training. Only fully evaluated promoted offspring update the archive and enter environmental selection with the parent population.}
			\label{fig:method}
		\end{figure}
		
		The teacher $T_\phi$ first provides a stable easy-to-hard ordering of samples. Since a single deterministic prediction can be overconfident, we use Monte Carlo dropout \cite{gal2016dropout}: for sample $i$, the teacher is run $K$ times with dropout and averaged to $\bar p_i$. From $\bar p_i$ we compute confidence $c_i=\max_c\bar p_i(c)$, top-two margin $m_i=\bar p_i(c_{(1)})-\bar p_i(c_{(2)})$, and class-normalized entropy $\widetilde H_i=H_i/\log C_{\mathrm{cls}}$, all in $[0,1]$, and combine them into a single difficulty score,
		\begin{equation}
			d_i=\omega_1(1-c_i)+\omega_2(1-m_i)+\omega_3\widetilde H_i,
			\qquad \textstyle\sum_j\omega_j=1,
			\label{eq:difficulty}
		\end{equation}
		where $\boldsymbol{\omega}=[\omega_1,\omega_2,\omega_3]\in[0,1]^3$ weight the three uncertainty cues; their default calibration is discussed in Section~\ref{sec:setup}. Larger $d_i$ means lower teacher certainty. This score is used only to order and stratify samples. It is neither ground-truth difficulty nor architecture fitness. Because it does not use the true class, it may also miss a confidently wrong teacher. Table~\ref{tab:mechanism} examines this failure mode through uniformly and selectively weak teacher controls.
		
		\subsection{Joint Stratification and Compact Sets}
		
		The difficulty score becomes useful once it is combined with class information. This produces compact data sets that lower the proxy-training cost while retaining the stratified structure required by the theory.
		
		\begin{definition}[Joint strata and compact sets]
			\label{def:strata}
			Sort $\Dpool$ by $d_i$ into three difficulty tiers ($S_{\mathrm{easy}},S_{\mathrm{med}},S_{\mathrm{hard}}$), and then split each tier by class to obtain cell $\Dpool_{rc}$ with weight $w_{rc}=|\Dpool_{rc}|/|\Dpool|$. Draw two disjoint compact sets from these cells: the proxy-training set $\Dtr$ of size $n_{\mathrm{tr}}$ and the proxy-evaluation set $\Dev$ of size $n_{\mathrm{ev}}$. Proportional allocation gives $|\Dtr^{(rc)}|\approx w_{rc}n_{\mathrm{tr}}$ and $|\Dev^{(rc)}|\approx w_{rc}n_{\mathrm{ev}}$, with uniform sampling without replacement within each cell and $\Dtr\cap\Dev=\varnothing$. By default, $n_{\mathrm{tr}}=\varphi|\Dpool|$ with $\varphi=0.03$.
		\end{definition}
		
		Proportional allocation preserves the joint composition of class and difficulty without forcing all cells to have the same size. Variance is reduced by estimating performance within each stratum and then recombining the estimates using $w_{rc}$. The same difficulty tiers naturally provide the curriculum used by KD-Lite.
		
		\subsection{KD-Lite Fitness Approximation}
		\label{sec:kdlite}
		
		The teacher now takes a second role by providing soft supervision. In \emph{KD-Lite}, each candidate is trained only on the compact set $\Dtr$, and the teacher probabilities are cached once. By default, $|\Dtr|$ is only $3\%$ of the full data pool. KD-Lite is therefore a low-fidelity evaluator rather than a complete distillation procedure. It follows an easy-to-hard curriculum $\Dtr^{(1)}\subset\Dtr^{(2)}\subset\Dtr^{(3)}=\Dtr$, corresponding to easy samples, easy and medium samples, and finally all three tiers. At stage $s$, $\theta$ is updated by minimizing
		\begin{multline}
			F_s(\theta)=\frac{1}{|\Dtr^{(s)}|}\sum_{x\in\Dtr^{(s)}}\Big[
			\big(1-\alpha(a)\big)\,\ell_{\mathrm{CE}}(\theta;x)\\
			+\alpha(a)T_{\mathrm{KD}}^2D_{\mathrm{KL}}\!\left(p_T^{T_{\mathrm{KD}}}(x)\,\|\,p_\theta^{T_{\mathrm{KD}}}(x)\right)\Big],
			\label{eq:kdlite}
		\end{multline}
		where $\ell_{\mathrm{CE}}$ is cross-entropy, $D_{\mathrm{KL}}$ is KL divergence, and $p_T^{T_{\mathrm{KD}}}$ and $p_\theta^{T_{\mathrm{KD}}}$ are the teacher and candidate distributions at temperature $T_{\mathrm{KD}}$. The factor $T_{\mathrm{KD}}^2$ follows standard distillation practice \cite{hinton2015kd}, while $\alpha(a)\in[0,1]$ balances hard-label supervision against teacher supervision. Each stage stops when its validation loss no longer improves, using a patience of one epoch, but it is also capped at $t_{\max}=15$ SGD steps. If stage $s$ uses $t_s\le t_{\max}$ steps, then the total proxy budget is $T_c=t_1+t_2+t_3\le 3t_{\max}=45$. Easy stages often converge well before the cap, so the realized value of $T_c$ is typically smaller. Each stage warm-starts the next, and the final short-trained parameters are denoted by $\theta_c(a)$. The cap ensures $T_c\ll T_{\mathrm{full}}$ even when an individual candidate converges slowly.
		
		The distillation weight may depend on capacity: with $\pi(a)\in[0,1]$ a normalized capacity measure,
		\begin{equation}
			\alpha(a)=\alpha_0+\alpha_1\pi(a),
			\label{eq:alpha-method}
		\end{equation}
		clipped to $[0,1]$. Rather than assuming larger models need more supervision, $\alpha_1$ is calibrated on a small pilot set to reduce capacity-dependent proxy bias; Proposition~\ref{prop:alpha} justifies this, and the useful sign of $\alpha_1$ follows the measured bias trend.
		
		After KD-Lite training, $\theta_c(a)$ is scored on the independent set $\Dev$. Keeping $\Dev$ separate from $\Dtr$ avoids evaluating a candidate on the samples used for its short training and makes the sampling component of the score directly analyzable. The next subsection turns these evaluation outputs into the practical teacher-guided score.
		
		\subsection{Teacher-Guided Fitness Score}
		
		The practical TGL score combines model fidelity with application-level behaviour. It is kept separate from the simpler accuracy statistic used in the theory. Let $\ell_{\mathrm{CE}}(a)$ and $\ell_{\mathrm{KL}}(a)$ denote the mean cross-entropy and the KL divergence between teacher and candidate on $\Dev$. Let $\mathrm{CKA}(a)$ denote the centered kernel alignment \cite{kornblith2019cka} between their cached activations. The lower-is-better fidelity term is
		\begin{equation}
			\begin{aligned}
				S_{\mathrm{fid}}(a)
				&=\psi_1\widetilde\ell_{\mathrm{CE}}(a)
				+\psi_2\widetilde\ell_{\mathrm{KL}}(a)\\
				&\quad+\psi_3\bigl(1-\mathrm{CKA}(a)\bigr),
				\qquad \textstyle\sum_j\psi_j=1,
			\end{aligned}
			\label{eq:sfid}
		\end{equation}
		where $\boldsymbol{\psi}\in[0,1]^3$ and tildes denote minimum-maximum normalization over the offspring set. CKA is a practical cue and is not assumed by the theory. The application term uses relative FPR,
		\begin{equation}
			\begin{aligned}
				S_{\mathrm{perf}}(a)
				&=\frac{\mathrm{FPR}_{S}(a)-\mathrm{FPR}_{T}}
				{\max\{\mathrm{FPR}_{T},\epsilon_0\}}.
			\end{aligned}
			\label{eq:sperf}
		\end{equation}
		and the two combine as $S_{\mathrm{tgl}}(a)=\rho\,\widetilde S_{\mathrm{fid}}(a)+(1-\rho)\widetilde S_{\mathrm{perf}}(a)$ (lower better), where $\rho\in[0,1]$ balances fidelity against application behaviour and $S_{\mathrm{perf}}$ is replaced or omitted for tasks without a meaningful FPR. For the theoretical analysis we use instead the stratified accuracy statistic
		\begin{equation}
			S(a)=\sum_{r,c}w_{rc}\,\bar g_{rc}(a),
			\label{eq:score-analysis}
		\end{equation}
		where $\bar g_{rc}(a)$ is the mean correctness of $\theta_c(a)$ on $\Dev^{(rc)}$. Thus, $S_{\mathrm{tgl}}$ is the practical multi-signal promotion score, while $S(a)$ is the estimator analysed in Section~\ref{sec:theory}. The teacher-guided score is immediately available for new candidates; the GP surrogate described next contributes information accumulated from earlier full evaluations.
		
		\subsection{Surrogate Fusion and Candidate Promotion}
		
		KD-Lite provides candidate-specific information even during surrogate cold start. The GP surrogate complements this signal by learning from the archive $\mathcal{E}$ of fully evaluated architectures. For the audio tasks, it predicts accuracy $\widehat{\mathrm{ACC}}(a)$, FPR $\widehat{\mathrm{FPR}}(a)$, and posterior uncertainty $\sigma_{\mathrm{gp}}(a)$. Cheap hardware quantities are obtained separately from the calibrated cost model. The lower-is-better surrogate quality score is
		\begin{equation}
			S_{\mathrm{gp}}(a)=
			\nu\big(1-\widehat{\mathrm{ACC}}(a)\big)
			+(1-\nu)\widehat{\mathrm{FPR}}(a),
			\quad \nu\in[0,1],
			\label{eq:sgp}
		\end{equation}
		For tasks without an FPR objective, the FPR term is omitted by setting $\nu=1$. Predicted constraint violations are summarized by $\widehat{\CV}(a)$. After minimum-maximum normalization of $S_{\mathrm{gp}}$ and $S_{\mathrm{tgl}}$ over the current offspring set, the fused lower-is-better screening score is
		\begin{equation}
			S_{\mathrm{c}}(a)=
			\beta^\star\widetilde S_{\mathrm{gp}}(a)
			+(1-\beta^\star)\widetilde S_{\mathrm{tgl}}(a)
			+\zeta_t\,\widetilde{\widehat{\CV}}(a),
			\label{eq:fusion}
		\end{equation}
		where $\beta^\star\in[0,1]$ is the clipped residual-variance weight of Proposition~\ref{prop:beta}, and $\zeta_t\ge0$ is a generation-dependent constraint penalty (small early to allow exploration near the boundary, larger later). Predicted quality and GP uncertainty then form the higher-is-better promotion score
		\begin{equation}
			S_{\mathrm{f}}(a)=
			(1-\eta_e)\big(1-\widetilde S_{\mathrm{c}}(a)\big)
			+\eta_e\,\widetilde\sigma_{\mathrm{gp}}(a),
			\quad \eta_e\in[0,1],
			\label{eq:sfinal}
		\end{equation}
		This follows the GP-UCB intuition \cite{srinivas2010gpucb}. The exploration weight $\eta_e$ balances predicted quality against the GP uncertainty bonus. The top $k\%$ of offspring, with a default of $20\%$, are promoted to full training. Only these promoted offspring receive true objective values and enter constrained nondominated sorting with the parent population. The remaining offspring are discarded rather than entering selection with proxy values. Consequently, the archive, the population, and the reported Pareto front remain on the same fully evaluated scale. Algorithm~\ref{alg:main} summarizes the complete procedure, including offspring generation by SBX and polynomial mutation.
		
		\begin{algorithm}[!t]
			\caption{TGL-NSGA-II}
			\label{alg:main}
			\begingroup
			\scriptsize
			\renewcommand{\baselinestretch}{0.88}\selectfont
			\algrenewcommand\algorithmicindent{0.85em}
			\setlength{\topsep}{1pt}
			\setlength{\partopsep}{0pt}
			\setlength{\itemsep}{0pt}
			\setlength{\parsep}{0pt}
			\begin{algorithmic}[1]
				\Require population size $P$, number of generations $G$, teacher $T_\phi$, hardware cost model $C$, promotion fraction $k$
				\Ensure feasible Pareto set
				\State \textbf{Offline:} run $T_\phi$ on $\Dpool$; compute $d_i$ using \eqref{eq:difficulty}; build joint strata; draw disjoint $\Dtr$ and $\Dev$; cache teacher outputs
				\State initialize population $\mathcal{P}_0$; fully evaluate it; initialize archive $\mathcal{E}$
				\For{$t=0$ to $G-1$}
				\State fit the GP surrogate on $\mathcal{E}$; estimate $\beta^\star$; update $\zeta_t$
				\State generate offspring $\mathcal{Q}_t$ from $\mathcal{P}_t$ by SBX and polynomial mutation
				\ForAll{$a\in\mathcal{Q}_t$}
				\State obtain cheap hardware objectives and constraints from $C(a)$
				\State run KD-Lite on $\Dtr$ to obtain $\theta_c(a)$
				\State evaluate $\theta_c(a)$ on $\Dev$ and compute $S_{\mathrm{tgl}}(a)$
				\State compute $S_{\mathrm{gp}}(a)$, $S_{\mathrm{c}}(a)$, and $S_{\mathrm{f}}(a)$
				\EndFor
				\State let $\mathcal{Q}^{\mathrm{full}}_t$ be the top $k\%$ offspring by $S_{\mathrm{f}}$; fully evaluate them; add them to $\mathcal{E}$
				\State form $\mathcal{P}_{t+1}$ by constrained nondominated sorting and crowding on $\mathcal{P}_t\cup\mathcal{Q}^{\mathrm{full}}_t$
				\EndFor
			\end{algorithmic}
			\endgroup
			\vspace{-0.8ex}
		\end{algorithm}
		
		\subsection{Computational Cost}
		
		Having described how candidates are promoted, we can now account for the additional computation. A central design choice keeps the teacher away from the repeated search loop as much as possible. The MC-dropout difficulty pass runs once on $\Dpool$, after which the teacher probability vectors and required activations for the compact sets are cached. The one-time teacher cost is therefore $O(|\Dpool|)$ forward evaluations, scaled by the MC-dropout count $K$.
		
		In each generation, every offspring receives a KD-Lite evaluation. If $n_{\mathrm{tr}}=|\Dtr|$ and $n_{\mathrm{ev}}=|\Dev|$, the short-training cost is approximately $O(Pn_{\mathrm{tr}}T_c)$, followed by an evaluation cost proportional to $Pn_{\mathrm{ev}}$. Only the promoted fraction $kP$ receives full training, which gives an expensive-training cost of $O(kP T_{\mathrm{full}})$. The intended regime is $n_{\mathrm{tr}}T_c\ll T_{\mathrm{full}}$. For a standard implementation, GP fitting costs approximately $O(n_{\mathcal{E}}^3)$ with $n_{\mathcal{E}}=|\mathcal{E}|$, while NSGA-II environmental selection adds $O(P^2)$ per generation. TGL-NSGA-II therefore spends a small amount of computation on every offspring to avoid a full training run for poor candidates. The experiments report wall-clock cost so that this trade-off can be assessed directly. The next section examines what this low-fidelity procedure guarantees about candidate ordering.
		
		\section{Theoretical Analysis}
		\label{sec:theory}
		
		Having specified the algorithm, we now ask what can be guaranteed about the low-fidelity accuracy score that drives its promotion decisions. The scope is deliberately narrow: the results hold for a fixed architecture $a$ or a fixed finite population $\mathcal{P}$, and quantify sampling error, pairwise-comparison error, and the resulting perturbation of the first nondominated front. They do not establish convergence of the full multi-generation NSGA-II trajectory, since an incorrect decision in one generation changes which candidates are generated later. The analysis proceeds in four steps: (i) variance reduction via stratification, (ii) optimization error of KD-Lite, (iii) concentration and ranking bounds, and (iv) population-level front and hypervolume perturbation. Each step is self-contained, and the implementation rules are derived directly from the bounds.
		
		\subsection{Assumptions and Error Decomposition}
		
		For a fixed architecture $a$, let $\theta_c(a)$ denote the parameters obtained after the short curriculum training on $\Dtr$. Let
		\begin{equation}
			u_c(a)=\Pr_{(x,y)\sim\Dpool}\!\left[\hat y_{\theta_c(a)}(x)=y\right]
			\label{eq:uc}
		\end{equation}
		be the population accuracy of the proxy-trained candidate, and let $u(a)$ denote its full-training population accuracy defined in Section~\ref{sec:problem}. The stratified score $S(a)$ in \eqref{eq:score-analysis} estimates $u_c(a)$ using the independent set $\Dev$.
		
		We define the \emph{signed proxy bias}
		\begin{equation}
			B(a)=u_c(a)-u(a),
			\label{eq:signed-bias}
		\end{equation}
		and the centred evaluation noise
		\begin{equation}
			Z(a)=S(a)-u_c(a).
			\label{eq:eval-noise}
		\end{equation}
		Hence
		\begin{equation}
			S(a)=u(a)+B(a)+Z(a).
			\label{eq:score-decomp}
		\end{equation}
		This decomposition separates a systematic training bias $B(a)$ from the sampling fluctuation $Z(a)$. Increasing $n_{\mathrm{ev}}$ reduces $Z(a)$ but does not, by itself, remove $B(a)$.
		
		\begin{assumption}[Linearized short-training regime]
			\label{as:linear}
			During the $T_c$ proxy-training steps, the candidate logits are well approximated by an affine function of the trainable parameters on the domain visited by projected stochastic gradient descent (SGD). Under this approximation, the cross-entropy and forward KL distillation terms in \eqref{eq:kdlite} are convex in the optimized parameters.
		\end{assumption}
		
		\begin{assumption}[Teacher stratification is informative]
			\label{as:teacher}
			For the correctness statistic of a fixed proxy-trained candidate, at least two strata defined jointly by difficulty and class have different conditional means. Equivalently, the between-stratum variance defined in \eqref{eq:rho} is nonzero. If this assumption fails, stratification reduces to simple random sampling in expectation, so no degradation occurs. The assumption is testable from pilot evaluations and does not require the teacher to be perfectly accurate.
		\end{assumption}
		
		\begin{assumption}[Bounded stage drift and smoothness]
			\label{as:drift}
			For curriculum stage $s$, the linearized objective $F_s$ is $\mu$-strongly convex and $L$-smooth, with gradients bounded by $G_F$ on the optimization domain. If $\theta_s^\star$ is its minimizer, then
			$\|\theta_s^\star-\theta_{s+1}^\star\|\le\delta_s$.
		\end{assumption}
		
		\begin{assumption}[Margin transfer and bounded systematic bias]
			\label{as:bias}
			There exists a finite constant $c_m>0$ such that the optimization residual of the short-training problem changes population classification accuracy by at most $c_m\sqrt{\epsilon_{\mathrm{opt}}(a)}$ in the regime considered. Such margin bounds are standard in classification theory when the loss is margin-calibrated. In addition, the systematic difference between the proxy-training objective and full task training is bounded by a nonnegative teacher-mismatch term $\varepsilon_{\mathrm{KD}}(a)$ and a residual capacity term $\Gamma(a)$. Thus
			\begin{equation}
				|B(a)|\le b_{\max}(a)
				:=c_m\sqrt{\epsilon_{\mathrm{opt}}(a)}
				+\alpha(a)\varepsilon_{\mathrm{KD}}(a)
				+\Gamma(a).
				\label{eq:bias}
			\end{equation}
			Here $\varepsilon_{\mathrm{KD}}(a)$ measures mismatch between the teacher soft target and the task target in the distillation objective; it is not identified with the teacher's zero-one classification error.
		\end{assumption}
		
		\subsection{Variance Reduction by Stratification}
		
		The first step controls the sampling term $Z(a)$. We show that scoring a fixed proxy-trained candidate on proportionally stratified data is never worse, and generically better, than scoring it on a simple random draw of the same size.
		
		\begin{lemma}[Variance reduction under proportional stratification]
			\label{lem:strat}
			Fix the trained parameters $\theta_c(a)$ and define
			$g_a(x,y)=\mathbf{1}[\hat y_{\theta_c(a)}(x)=y]\in[0,1]$.
			Let $\mu_g$ and $\sigma^2$ be the pool mean and variance of $g_a$. For a partition into cells $\{\Dpool_\iota\}$ with weights $w_\iota$, cell means $\mu_\iota$, and within-cell variances $\sigma_\iota^2$, let $\bar g_{\mathrm{srs}}$ be the mean of $n_{\mathrm{ev}}$ simple-random-sampling draws and let $\bar g_{\mathrm{str}}$ be the proportionally stratified estimator. Ignoring the favourable finite-population correction for sampling without replacement,
			\begin{equation}
				\Var[\bar g_{\mathrm{srs}}]-\Var[\bar g_{\mathrm{str}}]
				=\frac{1}{n_{\mathrm{ev}}}\sum_\iota w_\iota(\mu_\iota-\mu_g)^2\ge0.
				\label{eq:var-gain}
			\end{equation}
			Define the stratification efficiency
			\begin{equation}
				\rstr=
				\frac{\sum_\iota w_\iota(\mu_\iota-\mu_g)^2}{\sigma^2}\in[0,1].
				\label{eq:rho}
			\end{equation}
			Then
			\begin{equation}
				\Var[\bar g_{\mathrm{str}}]
				=(1-\rstr)\frac{\sigma^2}{n_{\mathrm{ev}}}.
				\label{eq:str-var}
			\end{equation}
			If partition $\mathcal{B}$ refines partition $\mathcal{A}$, then
			$\rstr(\mathcal{B})\ge\rstr(\mathcal{A})$, provided proportional allocation remains feasible in the refined cells.
		\end{lemma}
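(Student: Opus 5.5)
The plan is to treat $g_a$ as a fixed bounded function on the finite pool and to work entirely within the classical survey-sampling calculus, ignoring the finite-population correction as the statement allows. The backbone is the law of total variance for the partition $\{\Dpool_\iota\}$:
\begin{equation}
\sigma^2=\sum_\iota w_\iota\sigma_\iota^2+\sum_\iota w_\iota(\mu_\iota-\mu_g)^2 .
\end{equation}
Here the pool variance splits into a within-cell and a between-cell component. I will verify this identity directly by expanding $(g-\mu_g)^2=(g-\mu_\iota)^2+2(g-\mu_\iota)(\mu_\iota-\mu_g)+(\mu_\iota-\mu_g)^2$ inside each cell. The cross term vanishes because $\mu_\iota$ is the cell mean.

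Next I compute the two estimator variances. For simple random sampling with replacement, $\Var[\bar g_{\mathrm{srs}}]=\sigma^2/n_{\mathrm{ev}}$. For the proportionally stratified estimator $\bar g_{\mathrm{str}}=\sum_\iota w_\iota\bar g_\iota$, I use independence across cells and $n_\iota=w_\iota n_{\mathrm{ev}}$:
\begin{equation}
\Var[\bar g_{\mathrm{str}}]=\sum_\iota w_\iota^2\frac{\sigma_\iota^2}{n_\iota}=\frac{1}{n_{\mathrm{ev}}}\sum_\iota w_\iota\sigma_\iota^2 .
\end{equation}
Subtracting and substituting the decomposition gives \eqref{eq:var-gain}, which is nonnegative as a weighted sum of squares. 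Rewriting the within-cell term as $\sigma^2-\sum_\iota w_\iota(\mu_\iota-\mu_g)^2=(1-\rstr)\sigma^2$ gives \eqref{eq:str-var}. Because both components of the decomposition are nonnegative, $\rstr\in[0,1]$ whenever $\sigma^2>0$. The degenerate case $\sigma^2=0$ makes both variances vanish, so I will adopt a convention such as $\rstr:=0$ for it.

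For the refinement claim, I will show that the between-cell variance is monotone under refinement, since $\sigma^2$ is fixed. Suppose each cell $\Dpool_\iota$ of $\mathcal{A}$ is a union of cells $\Dpool_\kappa$ of $\mathcal{B}$. Apply the same total-variance identity inside $\Dpool_\iota$ to the function that is constant $\mu_\kappa$ on each subcell $\kappa$. This yields
\begin{equation}
\sum_{\kappa\subset\iota}w_\kappa(\mu_\kappa-\mu_g)^2=w_\iota(\mu_\iota-\mu_g)^2+\sum_{\kappa\subset\iota}w_\kappa(\mu_\kappa-\mu_\iota)^2 .
\end{equation}
Summing over $\iota$ shows that the between-cell variance of $\mathcal{B}$ is at least that of $\mathcal{A}$. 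Equivalently, the within-cell variance is nonincreasing under refinement, which is Jensen's inequality or the tower property.

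The proviso about feasible proportional allocation is what justifies using the formula $\Var=\frac1{n_{\mathrm{ev}}}\sum w\sigma^2$ for the refined partition. The main point needing care is not the algebra but the idealization. The identity requires $n_\iota=w_\iota n_{\mathrm{ev}}$ exactly, whereas Definition~\ref{def:strata} only has approximate allocation. I will state that the result holds under exact proportional allocation, which is the stated hypothesis. I will also note that rounding perturbs the variance only at order $O(1/n_{\mathrm{ev}}^2)$ per cell, and that the finite-population correction only multiplies both variances by factors below one.
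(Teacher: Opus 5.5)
Your proposal is correct and follows the paper's proof: the law of total variance, the proportional-allocation formula $\Var[\bar g_{\mathrm{str}}]=\frac{1}{n_{\mathrm{ev}}}\sum_\iota w_\iota\sigma_\iota^2$, and subtraction from $\sigma^2/n_{\mathrm{ev}}$. Your within-coarse-cell decomposition for the refinement claim is an explicit form of the paper's tower-property/$L^2$-contraction argument, and your treatment of the degenerate case $\sigma^2=0$ and of exact allocation is slightly more careful than the paper's.
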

		
		\begin{proof}
			Condition on $\theta_c(a)$. The law of total variance gives $\sigma^2=\sum_\iota w_\iota\sigma_\iota^2+\sum_\iota w_\iota(\mu_\iota-\mu_g)^2$. Under proportional allocation $n_\iota=w_\iota n_{\mathrm{ev}}$, so $\Var[\bar g_{\mathrm{str}}]=\sum_\iota w_\iota^2\sigma_\iota^2/n_\iota=\tfrac{1}{n_{\mathrm{ev}}}\sum_\iota w_\iota\sigma_\iota^2$. Subtracting this from $\sigma^2/n_{\mathrm{ev}}$ gives Eqs.~\eqref{eq:var-gain} and~\eqref{eq:str-var}. The refinement claim follows because conditional expectation onto a finer partition cannot reduce the between-cell variance $\Var(\E[g_a\mid Z])$, by the tower property and $L^2$ contraction.
		\end{proof}
		
		Lemma~\ref{lem:strat} concerns the independent evaluation set $\Dev$. It does \emph{not} claim that drawing a stratified proxy-training set automatically reduces the variance of a data-dependent trained predictor. This separation is why $\Dtr$ and $\Dev$ are defined independently.
		
		\subsection{KD-Lite Optimization Error}
		
		Having bounded the sampling term, we turn to the systematic term $B(a)$, which originates in the short curriculum training itself. The next lemma makes the KD-Lite objective analyzable by giving a per-stage optimization-error recursion under the linearized regime.
		
		\begin{lemma}[Convexity and staged recursion]
			\label{lem:stages}
			Under Assumption~\ref{as:linear}, $F_s$ in \eqref{eq:kdlite} is convex for every $\alpha\in[0,1]$ and distillation temperature $T_{\mathrm{KD}}>0$. Under Assumption~\ref{as:drift}, projected SGD with step size $\eta \le 1/L$ for $t_s$ steps at stage $s$, warm-started from the averaged iterate of stage $s-1$, satisfies
			\begin{equation}
				\epsilon_s\le
				\frac{1}{2\eta t_s}
				\left(
				\sqrt{\frac{2\epsilon_{s-1}}{\mu}}+\delta_{s-1}
				\right)^2
				+\frac{\eta G_F^2}{2},
				\label{eq:recursion}
			\end{equation}
			where
			$\epsilon_s=\E[F_s(\bar\theta_s)]-F_s(\theta_s^\star)$ and $\delta_0=0$.
			Let $\epsilon_{\mathrm{opt}}(a):=\epsilon_3$.
		\end{lemma}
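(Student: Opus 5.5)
The proof has two parts: convexity of $F_s$, then a per-stage projected-SGD bound chained across stages through the warm start.

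\emph{Convexity.} By Assumption~\ref{as:linear}, on the visited domain the logits have the affine form $z_\theta(x)=A(x)\theta+b(x)$. Cross-entropy is $\ell_{\mathrm{CE}}(\theta;x)=\log\sum_c e^{z_c}-z_y$, a log-sum-exp minus a linear term, so it is convex in $z$ and hence in $\theta$. For the forward KL term, write
\begin{equation*}
D_{\mathrm{KL}}(p_T\|p_\theta^{T_{\mathrm{KD}}})=\mathrm{const}-\sum_c p_T(c)\,z_c/T_{\mathrm{KD}}+\log\sum_c e^{z_c/T_{\mathrm{KD}}}.
\end{equation*}
The teacher entropy is constant because teacher outputs are cached, and the remaining terms are again log-sum-exp plus linear, so the KL term is convex for any $T_{\mathrm{KD}}>0$. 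The weights $1-\alpha$ and $\alpha T_{\mathrm{KD}}^2$ are nonnegative, and averaging over the finite set $\Dtr^{(s)}$ preserves convexity. This gives convexity of $F_s$ for every $\alpha\in[0,1]$.

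\emph{Per-stage bound.} For a convex, $G_F$-Lipschitz objective over a convex domain, projected SGD with constant step $\eta$ and averaged iterate $\bar\theta_s$ obeys the standard bound
\begin{equation*}
\E[F_s(\bar\theta_s)]-F_s(\theta_s^\star)\le \frac{\E\|\theta_s^{(0)}-\theta_s^\star\|^2}{2\eta t_s}+\frac{\eta G_F^2}{2}.
\end{equation*}
It follows by expanding $\|\theta^{(k+1)}-\theta^\star\|^2$, using non-expansiveness of the projection and the subgradient inequality, then telescoping and applying Jensen's inequality. The condition $\eta\le1/L$ is not needed for this form but is harmless.

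\emph{Controlling the initial distance.} The warm start sets $\theta_s^{(0)}=\bar\theta_{s-1}$. By the triangle inequality,
\begin{equation*}
\|\bar\theta_{s-1}-\theta_s^\star\|\le\|\bar\theta_{s-1}-\theta_{s-1}^\star\|+\delta_{s-1}.
\end{equation*}
$\mu$-strong convexity of $F_{s-1}$ at its minimizer gives $\frac{\mu}{2}\|\theta-\theta_{s-1}^\star\|^2\le F_{s-1}(\theta)-F_{s-1}(\theta_{s-1}^\star)$. Hence $\E\|\bar\theta_{s-1}-\theta_{s-1}^\star\|^2\le 2\epsilon_{s-1}/\mu$, and Jensen's inequality gives $\E\|\bar\theta_{s-1}-\theta_{s-1}^\star\|\le\sqrt{2\epsilon_{s-1}/\mu}$.

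The main obstacle is that we need $\E[(X+\delta)^2]$, not $(\E X+\delta)^2$. Expanding,
\begin{equation*}
\E[(X+\delta)^2]=\E X^2+2\delta\,\E X+\delta^2\le \frac{2\epsilon_{s-1}}{\mu}+2\delta\sqrt{\frac{2\epsilon_{s-1}}{\mu}}+\delta^2,
\end{equation*}
using $\E X^2\le 2\epsilon_{s-1}/\mu$ and $\E X\le\sqrt{2\epsilon_{s-1}/\mu}$ from above. The right-hand side is exactly $\big(\sqrt{2\epsilon_{s-1}/\mu}+\delta_{s-1}\big)^2$. Substituting into the per-stage bound yields \eqref{eq:recursion}.

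\emph{Base case.} For $s=1$ we have $\delta_0=0$, and $\epsilon_0$ is read as the initialization suboptimality $F_1(\theta^{(0)})-F_1(\theta^\star_1)$, so the same argument applies. One caveat: strong convexity and $L$-smoothness are only asserted on the optimization domain, so I will state that projection keeps all iterates inside it.
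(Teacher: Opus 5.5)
Your proposal is correct and follows the paper's proof. Both argue convexity of cross-entropy and forward KL in the student logits, preserved under the affine logit map, and then combine the standard projected-SGD bound $\epsilon_s\le \E\|\theta_s^{(0)}-\theta_s^\star\|^2/(2\eta t_s)+\eta G_F^2/2$ with the triangle inequality through $\delta_{s-1}$ and strong convexity of $F_{s-1}$. Your version is more careful than the paper's in two places. The paper writes $\|\theta_{s-1}-\theta_{s-1}^\star\|\le\sqrt{2\epsilon_{s-1}/\mu}$ as if deterministic, whereas you bound $\E[(X+\delta_{s-1})^2]$ using $\E X^2\le 2\epsilon_{s-1}/\mu$ and Jensen. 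You also give a consistent meaning to $\epsilon_0$, which the paper leaves undefined.
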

		
		\begin{proof}
			Cross-entropy with a fixed target and the forward KL divergence $D_{\mathrm{KL}}(p_T^{T_{\mathrm{KD}}}\|p_\theta^{T_{\mathrm{KD}}})$ are both convex in the student logits (the teacher distribution is fixed); under Assumption~\ref{as:linear} the logits are affine in the parameters, so convexity is preserved. The recursion follows from the standard projected-SGD bound under $L$-smoothness and $\mu$-strong convexity: $\epsilon_s\le\|\theta_{s-1}-\theta_s^\star\|^2/(2\eta t_s)+\eta G_F^2/2$, the triangle inequality, and strong convexity $\|\theta_{s-1}-\theta_{s-1}^\star\|\le\sqrt{2\epsilon_{s-1}/\mu}$.
		\end{proof}
		
		The lemma provides an analyzable optimization residual. It does not state that curriculum ordering is always superior to random ordering. A curriculum is useful when successive stage minimizers are sufficiently close; otherwise the staged schedule may provide no optimization advantage.
		
		\subsection{Score Concentration}
		
		The two preceding results combine directly: the stratified variance of Lemma~\ref{lem:strat} controls the sampling fluctuation, while the optimization residual of Lemma~\ref{lem:stages} feeds the bias bound of Assumption~\ref{as:bias}. Together they bound how far the observed score can stray from true accuracy.
		
		\begin{theorem}[Conditional concentration of the TGL accuracy score]
			\label{thm:conc}
			Fix architecture $a$ and condition on the proxy-trained parameters $\theta_c(a)$ obtained from $\Dtr$. Let $S(a)$ be evaluated on the independent proportionally stratified set $\Dev$ of size $n_{\mathrm{ev}}$. Then, for any $\delta\in(0,1)$, with probability at least $1-\delta$ over the draw of $\Dev$,
			\begin{equation}
				|S(a)-u(a)|
				\le |B(a)|+\Psi(n_{\mathrm{ev}},\delta),
				\label{eq:conc}
			\end{equation}
			where
			\begin{equation}
				\Psi(n_{\mathrm{ev}},\delta)=
				\sqrt{
					\frac{2(1-\rstr)\sigma^2\log(2/\delta)}
					{n_{\mathrm{ev}}}
				}
				+\frac{2\log(2/\delta)}{3n_{\mathrm{ev}}}.
				\label{eq:psi}
			\end{equation}
			Under Assumption~\ref{as:bias}, $|B(a)|$ may be replaced by $b_{\max}(a)$ in \eqref{eq:bias}. Moreover, the centred sampling error $Z(a)$ is sub-Gaussian with variance proxy
			\begin{equation}
				\sEval^2=(1-\rstr)\sigma^2/n_{\mathrm{ev}}.
				\label{eq:sigma}
			\end{equation}
		\end{theorem}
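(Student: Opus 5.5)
Conditioning on $\theta_c(a)$ freezes both $u_c(a)$ and $B(a)$, so by the decomposition \eqref{eq:score-decomp} and the triangle inequality
\[
|S(a)-u(a)|\le|B(a)|+|Z(a)|,
\]
and it suffices to show that $|Z(a)|\le\Psi(n_{\mathrm{ev}},\delta)$ with probability at least $1-\delta$ over the draw of $\Dev$. Under the proportional allocation of Definition~\ref{def:strata}, $S(a)=\sum_\iota w_\iota\bar g_\iota(a)$. Cell $\iota$ contributes $n_\iota=w_\iota n_{\mathrm{ev}}$ draws, and $w_\iota/n_\iota=1/n_{\mathrm{ev}}$. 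Hence
\[
Z(a)=\frac{1}{n_{\mathrm{ev}}}\sum_{\iota}\sum_{k=1}^{n_\iota}\bigl(g_a(x_{\iota k},y_{\iota k})-\mu_\iota\bigr),
\]
a normalized sum of $n_{\mathrm{ev}}$ independent, centred summands. Each summand lies in $[-1,1]$ because $g_a\in[0,1]$. Sampling within cells is without replacement, but I will treat the draws as independent. This matches the convention of Lemma~\ref{lem:strat}. Hoeffding's reduction, that sampling without replacement is dominated in convex order by sampling with replacement, shows this only makes the bound conservative.

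The key step is Bernstein's inequality for this heterogeneous sum. Summand $k$ in cell $\iota$ has variance $\sigma_\iota^2$, so the total variance of $n_{\mathrm{ev}}Z(a)$ is $\sum_\iota n_\iota\sigma_\iota^2=n_{\mathrm{ev}}\sum_\iota w_\iota\sigma_\iota^2$. By the law-of-total-variance identity inside the proof of Lemma~\ref{lem:strat}, this equals $n_{\mathrm{ev}}(1-\rstr)\sigma^2$. With range bound $M=1$, the two-sided Bernstein inequality gives, for every $t>0$,
\[
\Pr\bigl[|Z(a)|\ge t\bigr]\le 2\exp\!\Bigl(-\frac{n_{\mathrm{ev}}t^2}{2(1-\rstr)\sigma^2+2t/3}\Bigr).
\]
Set the right-hand side equal to $\delta$ and solve the resulting quadratic in $t$. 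The standard inequality $\sqrt{A+B}\le\sqrt A+\sqrt B$ then yields the explicit deviation
\[
t\le\sqrt{2(1-\rstr)\sigma^2\log(2/\delta)/n_{\mathrm{ev}}}+\tfrac{2\log(2/\delta)}{3n_{\mathrm{ev}}},
\]
which is exactly $\Psi$ in \eqref{eq:psi}. I expect the constant bookkeeping here to be the only delicate point. In particular, the linear term must come out as $2/3$ and not $1/3$. To get that, I will use the conservative range bound $|g-\mu_\iota|\le1$ and solve the quadratic without tightening. Substituting $b_{\max}(a)$ for $|B(a)|$ under Assumption~\ref{as:bias} is then immediate, because \eqref{eq:bias} bounds $|B(a)|$ deterministically.

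For the sub-Gaussian claim, I will bound the moment generating function by Hoeffding's lemma, applied summand by summand. Each summand $(g-\mu_\iota)/n_{\mathrm{ev}}$ lies in an interval of length $1/n_{\mathrm{ev}}$. Hoeffding's lemma alone, however, gives the range-based proxy $1/(4n_{\mathrm{ev}})$ and not the variance-based $\sEval^2=(1-\rstr)\sigma^2/n_{\mathrm{ev}}$. I expect this to be the main obstacle. Variance-based sub-Gaussianity holds only on the Bernstein scale $|\lambda|\lesssim 3n_{\mathrm{ev}}$, i.e.\ in the sub-gamma sense. I therefore plan to establish $\log\E e^{\lambda Z}\le\lambda^2\sEval^2/\bigl(2(1-|\lambda|/(3n_{\mathrm{ev}}))\bigr)$ and to interpret "variance proxy $\sEval^2$" as the leading Gaussian-regime term. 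I will also note that for $n_{\mathrm{ev}}$ large relative to $\log(2/\delta)/\sEval^2$ the linear correction is negligible. That is the form in which the subsequent ranking bounds use \eqref{eq:sigma}.
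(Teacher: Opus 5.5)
Your derivation of \eqref{eq:conc}--\eqref{eq:psi} follows the paper's argument: the decomposition \eqref{eq:score-decomp}, the triangle inequality, and Bernstein with the stratified variance of Lemma~\ref{lem:strat}. You carry it out more carefully than the paper does. You make $w_\iota/n_\iota=1/n_{\mathrm{ev}}$ explicit, you justify the independence idealization through Hoeffding's convex-order comparison instead of simply dropping the finite-population correction, and your quadratic inversion does give exactly $\Psi$.

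You need not engineer the constant $2/3$, though. The sub-gamma inversion gives the smaller linear term $\log(2/\delta)/(3n_{\mathrm{ev}})$, which already implies \eqref{eq:psi}.

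One caveat you inherit from the paper. Given $\Dtr$, on which $\theta_c(a)$ depends, the cells of $\Dev$ are sampled from $\Dpool_\iota\setminus\Dtr_\iota$. The mean of $g_a$ on these points generally differs from $\mu_\iota$, because the candidate was trained on the removed points. So your summands are centred only up to an offset of at most about $\varphi$. Defining $u_c(a)$ on $\Dpool\setminus\Dtr$ removes the offset.

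On the sub-Gaussian claim you part ways with the paper, and you are right to. The paper's proof says that Hoeffding's lemma within each stratum yields variance proxy $\sEval^2$. As you note, it actually yields $\sum_\iota w_\iota^2/(4n_\iota)=1/(4n_{\mathrm{ev}})$. This is strictly larger than $\sEval^2=n_{\mathrm{ev}}^{-1}\sum_\iota w_\iota\sigma_\iota^2$ unless every cell mean equals $1/2$.

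No sharper argument can rescue \eqref{eq:sigma} as stated. Suppose every cell has mean correctness $0.9$, so $\rstr=0$ and $\sigma^2=0.09$. Under independent draws, at $\lambda=-3n_{\mathrm{ev}}$ the log-MGF $\log\E e^{\lambda Z(a)}$ equals $n_{\mathrm{ev}}\log(0.1e^{2.7}+0.9e^{-0.3})\approx0.77\,n_{\mathrm{ev}}$. The proxy $\sEval^2$ permits only $0.405\,n_{\mathrm{ev}}$. The optimal proxy here is the Kearns--Saul value, about $0.18/n_{\mathrm{ev}}$.

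Your sub-gamma bound, with variance factor $\sEval^2$ and scale $1/(3n_{\mathrm{ev}})$, is therefore the correct statement. Present it as a correction of \eqref{eq:sigma}, not as an interpretation of it.

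Your remark that this is how the ranking bounds use \eqref{eq:sigma} is also inaccurate. Theorem~\ref{thm:rank} applies the pure Gaussian tail for every $t$, so the correction propagates downstream:
\begin{itemize}
\item the inversion probability becomes $2\exp[-n_{\mathrm{ev}}t^2/(8(1-\rstr)\sigma^2+4t/3)]$;
\item the maximal inequality in Proposition~\ref{prop:hv} gains an additive $\log(2P)/(3n_{\mathrm{ev}})$.
\end{itemize}
The alternative is to keep a genuine sub-Gaussian proxy, but that forces $1/(4n_{\mathrm{ev}})$ and forfeits the factor $1-\rstr$.
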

		
		\begin{proof}
			From \eqref{eq:score-decomp}, $S(a)-u(a)=B(a)+Z(a)$. Conditional on $\theta_c(a)$, $g_a$ is fixed and $S(a)$ is a proportionally stratified mean whose variance is given by Lemma~\ref{lem:strat}. Bernstein's inequality for bounded independent stratum samples gives \eqref{eq:psi}, and the triangle inequality yields \eqref{eq:conc}. Hoeffding's lemma within each independent stratum, combined over weighted stratum means, gives the sub-Gaussian form in \eqref{eq:sigma}.
		\end{proof}
		
		Only the sampling term decreases with $n_{\mathrm{ev}}$; the systematic term $B(a)$ is produced by short training, distillation, and capacity, and its architecture-independent part cancels in pairwise comparisons.
		
		\subsection{Ranking Reliability}
		
		Concentration of a single score is only a means to the end that matters for selection: the reliability of \emph{comparisons}. We now split the signed bias into a common part and a candidate-dependent part, and show that only the latter can invert a pairwise comparison.
		
		\begin{theorem}[Rank inversion and expected Kendall-$\tau$]
			\label{thm:rank}
			For architecture $a$, write the signed bias as
			\begin{equation}
				B(a)=\bar B+\xi(a),
				\label{eq:bias-split}
			\end{equation}
			where $\bar B$ is common to all candidates in the population and
			$|\xi(a)|\le\bar\xi$, with $\bar\xi$ a known upper bound estimated from pilot data. For the population-level bound, $\sigma^2$ denotes a uniform upper bound on the conditional correctness variance across all candidates in $\mathcal{P}$, and $\rstr$ denotes a conservative lower bound on the stratification efficiency. Consider two candidates $a$ and $a'$ with true accuracy gap
			$\Delta=|u(a)-u(a')|$. Define
			$t=(\Delta-2\bar\xi)_+$. Then
			\begin{equation}
				\begin{aligned}
					\Pr[(a,a')\text{ is inverted}]
					&\le q(\Delta)\\
					&:=\min\!\left\{1,
					2\exp\!\left[-\frac{n_{\mathrm{ev}}t^2}
					{8(1-\rstr)\sigma^2}\right]\right\}.
				\end{aligned}
				\label{eq:rank}
			\end{equation}
			For a fixed population $\mathcal{P}$ of size $P$,
			\begin{equation}
				\E[\tau(\mathcal{P})]
				\ge
				1-\frac{2}{\binom{P}{2}}
				\sum_{i<j}q(\Delta_{ij}).
				\label{eq:tau}
			\end{equation}
			The common signed bias $\bar B$ does not appear in either bound.
		\end{theorem}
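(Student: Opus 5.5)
The plan is to reduce the pairwise event to a tail event for the difference of the two centred evaluation noises, and then apply the sub-Gaussian bound of Theorem~\ref{thm:conc}. Without loss of generality let $u(a)>u(a')$, so $\Delta=u(a)-u(a')$. If $t=0$ the bound equals $1$ and there is nothing to prove, so assume $\Delta>2\bar\xi$. By \eqref{eq:score-decomp} and \eqref{eq:bias-split},
\begin{equation*}
S(a)-S(a')=\Delta+\xi(a)-\xi(a')+Z(a)-Z(a'),
\end{equation*}
and the common term $\bar B$ cancels identically. This is why it cannot enter either bound. Inversion requires $S(a)-S(a')<0$, and since $\xi(a)-\xi(a')\ge-2\bar\xi$, it requires
\begin{equation*}
Z(a')-Z(a)>\Delta-2\bar\xi=t.
\end{equation*}

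Next I condition on $\theta_c(a)$ and $\theta_c(a')$. By Theorem~\ref{thm:conc}, each of $Z(a)$ and $Z(a')$ is centred sub-Gaussian with variance proxy at most $\sEval^2=(1-\rstr)\sigma^2/n_{\mathrm{ev}}$. This uses the uniform $\sigma^2$ and the conservative lower bound on $\rstr$ assumed in the statement. Both noises are computed on the same set $\Dev$, so they are dependent, and I would not assume independence. Instead I would use a union bound: $\{Z(a')-Z(a)>t\}\subseteq\{Z(a')>t/2\}\cup\{-Z(a)>t/2\}$. Each event has probability at most $\exp[-(t/2)^2/(2\sEval^2)]=\exp[-n_{\mathrm{ev}}t^2/(8(1-\rstr)\sigma^2)]$. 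This gives exactly the constant $2$ and the factor $8$ in \eqref{eq:rank}, which confirms that this is the intended route. An alternative is to treat $Z(a')-Z(a)$ directly as the stratified mean of the bounded paired difference $g_{a'}-g_a$ and apply Hoeffding's lemma. However, its variance proxy is not controlled by $\sigma^2$ alone, so the union bound is the cleaner choice. The conditional bound holds for every realisation of the trained parameters, so it also holds unconditionally, and taking the minimum with $1$ is trivial.

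For \eqref{eq:tau}, I apply linearity of expectation to Definition~\ref{def:rank}. The population $\mathcal{P}$ is fixed, so $D=\sum_{i<j}\mathbf{1}[(a_i,a_j)\text{ inverted}]$ and $\E[D]\le\sum_{i<j}q(\Delta_{ij})$. Substituting into $\tau=1-2D/\binom{P}{2}$ gives the claim.

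Pairs with $\Delta_{ij}=0$ need a convention. Such a pair can never be inverted under the strict inequality in the definition, so it is consistent with $q(0)=1$. Pairs with $S$-ties are likewise not counted as inverted.

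The main obstacle is the dependence between $Z(a)$ and $Z(a')$ through the shared $\Dev$, together with the fact that the variance proxy must hold uniformly across candidates. The union-bound split at $t/2$ handles the first issue, and the uniform constants stated in the theorem handle the second. I would also note explicitly that the sub-Gaussian claim from Theorem~\ref{thm:conc} is only being used as an upper tail bound, so the $\sEval^2$ parameter is the quantity to verify.
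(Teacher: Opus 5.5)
Your proposal is correct and follows the paper's proof essentially step for step. Both use the same chain: the cancellation of $\bar B$ in $S(a)-S(a')$, the reduction of inversion to $Z(a')-Z(a)\ge t$, the union-bound split at $t/2$ (which is also how the paper avoids assuming independence of the noises computed on the shared $\Dev$), the sub-Gaussian tail with proxy $\sEval^2$ from Theorem~\ref{thm:conc}, and linearity of expectation for $\E[D]$. Your treatment of the cases $t=0$ and $\Delta_{ij}=0$ and of the passage from conditional to unconditional probabilities only makes explicit what the paper leaves implicit. As you note, both arguments take the variance proxy $\sEval^2$ as given from Theorem~\ref{thm:conc} rather than re-deriving it.
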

		
		\begin{proof}
			Assume without loss of generality that $u(a)>u(a')$. By \eqref{eq:score-decomp} and \eqref{eq:bias-split},
			\[
			S(a)-S(a')
			=
			\Delta+\xi(a)-\xi(a')+Z(a)-Z(a').
			\]
			The common term $\bar B$ cancels exactly. If the pair is inverted, then
			$Z(a')-Z(a)\ge t$. This event implies that either
			$Z(a')\ge t/2$ or $-Z(a)\ge t/2$. Therefore, by the union bound,
			\[
			\Pr[\text{inversion}]
			\le
			\Pr[Z(a')\ge t/2]+\Pr[-Z(a)\ge t/2].
			\]
			Each term is bounded using the sub-Gaussian tail from Theorem~\ref{thm:conc} with variance proxy $\sEval^2$ from \eqref{eq:sigma}. Clipping the resulting expression at one gives \eqref{eq:rank}. Equation~\eqref{eq:tau} follows from
			$\tau=1-2D/\binom{P}{2}$ and
			$\E[D]=\sum_{i<j}\Pr[(a_i,a_j)\text{ is inverted}]$.
		\end{proof}
		
		\begin{remark}[Uniform additive bias and differential bias]
			\label{rem:teacher}
			A candidate-independent additive bias does not affect dominance comparisons based on the accuracy coordinate because it cancels in every pairwise difference. Differential bias $\xi(a)$ can reverse such comparisons and therefore matters directly for ranking. This statement does not imply that arbitrary value distortions are harmless to NSGA-II: crowding distance uses numerical objective spacing. The practical teacher-selection criterion suggested by Theorem~\ref{thm:rank} is therefore to prefer a teacher whose induced \emph{signed bias is uniform across architecture families}, subject also to adequate predictive quality.
		\end{remark}
		
		The quantities in \eqref{eq:tau} must be estimated from information available before the main search if the bound is to be called a priori. In our protocol, $\rstr$ and $\sigma^2$ are estimated from pilot proxy evaluations, while $\bar\xi$ is estimated on a small pilot set spanning the architecture-capacity range. These pilot evaluations are reported as part of the evaluation budget and are not taken from the final search outcomes.
		
		\subsection{Fixed-Population Front and Hypervolume}
		
		Pairwise reliability lifts to the object NSGA-II actually selects. Since the first nondominated front changes only when a decisive dominance relation is reversed, the inversion probability of Theorem~\ref{thm:rank} bounds the chance that the proxy front differs from the true one.
		
		\begin{theorem}[Population-level first-front identification]
			\label{thm:front}
			Let $\mathcal{P}=\{a_1,\ldots,a_P\}$ be a \emph{fixed} candidate population. Let $F_1(\mathcal{P})$ be its first nondominated front under the true objectives and let $\widehat F_1(\mathcal{P})$ be the first front obtained when only the accuracy coordinate is replaced by the proxy score. Let $\mathcal{K}$ contain the candidate pairs whose dominance relation can change only through the accuracy coordinate while their feasibility status is unchanged; explicitly, $\mathcal{K} = \{(i,j): a_i \text{ and } a_j \text{ are incomparable or one dominates the other in all objectives except accuracy}\}$. Then
			\begin{equation}
				\Pr[\widehat F_1(\mathcal{P})\ne F_1(\mathcal{P})]
				\le
				\sum_{(i,j)\in\mathcal{K}}q(\Delta_{ij}),
				\label{eq:front}
			\end{equation}
			where $q(\cdot)$ is defined in \eqref{eq:rank}.
		\end{theorem}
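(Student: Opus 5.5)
The plan is an event-inclusion argument followed by a union bound over the pairs in $\mathcal{K}$, reusing the pairwise inversion bound of Theorem~\ref{thm:rank}.

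\textbf{Step 1: only the accuracy ordering changes.} The non-accuracy objectives $M,L,E_{\mathrm{inf}}$ and the constraint values are fixed by the encoding and the cost model. Replacing $u$ by $S$ therefore leaves two things unchanged: every candidate's feasibility status, and the coordinatewise order of all objectives other than accuracy for every pair. Consequently, the dominance relation between $a_i$ and $a_j$ can differ between the true and proxy objective vectors only if the accuracy comparison between them differs.

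\textbf{Step 2: reduce to pairs in $\mathcal{K}$.} I would argue that for pairs outside $\mathcal{K}$ the dominance relation is determined without the accuracy coordinate. This covers pairs whose status is fixed by the constrained-domination rule on differing feasibility or CV values. It also covers pairs whose non-accuracy objectives already conflict in both directions, which are incomparable regardless of accuracy. For pairs in $\mathcal{K}$, the relation changes only if the sign of $S(a_i)-S(a_j)$ differs from the sign of $u(a_i)-u(a_j)$, that is, only if the pair is inverted in the sense of Definition~\ref{def:rank}. A tie in proxy accuracy must also be covered; I would handle it by folding the non-strict event into the inversion event, since Theorem~\ref{thm:rank} bounds $Z(a')-Z(a)\ge t$, which is a closed event.

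\textbf{Step 3: front change implies an inversion in $\mathcal{K}$.} $F_1$ is the set of candidates not dominated by any other member. If every pairwise dominance relation is preserved, then $\widehat F_1(\mathcal{P})=F_1(\mathcal{P})$. Contrapositively,
\[
\{\widehat F_1\ne F_1\}\subseteq\bigcup_{(i,j)\in\mathcal{K}}\{(a_i,a_j)\text{ inverted}\}.
\]

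\textbf{Step 4: union bound.} Applying the union bound and Theorem~\ref{thm:rank} pair by pair gives $\Pr[(a_i,a_j)\text{ inverted}]\le q(\Delta_{ij})$, and summing over $\mathcal{K}$ yields \eqref{eq:front}. No independence between pairs is needed, so the correlations induced by the shared $Z(a_i)$ across pairs are harmless. The uniform $\sigma^2$ and conservative $\rstr$ in Theorem~\ref{thm:rank} make the same $q$ valid for every pair.

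\textbf{Main obstacle: Step 2.} The definition of $\mathcal{K}$ as stated is loose. It includes incomparable pairs, and some of those may stay incomparable under any accuracy values. Including them only weakens the bound, so the inequality remains correct. The delicate part is showing that no pair excluded from $\mathcal{K}$ can change dominance. I would do this with a case split on the non-accuracy coordinates:
\begin{itemize}
\item strictly conflicting in both directions: incomparable regardless of accuracy, so the relation cannot change;
\item weakly ordered one way: the relation hinges on accuracy, so the pair lies in $\mathcal{K}$;
\item differing feasibility: fixed by the constraint-handling rule, so the relation cannot change.
\end{itemize}
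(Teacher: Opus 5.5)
Your proposal is correct and follows the paper's own argument. The front-change event is contained in the union of inversion events over $\mathcal{K}$, and the union bound with Theorem~\ref{thm:rank} gives \eqref{eq:front}; your case split on the non-accuracy coordinates and your handling of proxy ties fill in details the paper leaves implicit. The one case your Step~2 wording skips is a true tie $u(a_i)=u(a_j)$ with $S(a_i)\ne S(a_j)$, where dominance can change without an inversion in the sense of Definition~\ref{def:rank}. It is harmless, because then $t=0$ and $q(\Delta_{ij})=1$, so the bound holds trivially.
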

		
		\begin{proof}
			For the fixed population, a difference between the true and proxy first fronts requires at least one critical dominance relation to be reversed. Hence
			\[
			\{\widehat F_1(\mathcal{P})\ne F_1(\mathcal{P})\}
			\subseteq
			\bigcup_{(i,j)\in\mathcal{K}}
			\{(a_i,a_j)\text{ is inverted}\}.
			\]
			Applying the union bound and Theorem~\ref{thm:rank} gives \eqref{eq:front}.
			\label{proof:front}
		\end{proof}
		
		We also state a separate value-perturbation result for hypervolume. Let
		$\mathbf{f}(a_i)$ be the true objective vector of candidate $a_i$, and let
		$\widehat{\mathbf{f}}(a_i)$ be the same vector with only the accuracy/error coordinate replaced by its proxy estimate. Denote these two sets of objective vectors for the same candidates by $\mathcal{Y}$ and $\widehat{\mathcal{Y}}$, with the same reference point used for both.
		
		\begin{proposition}[Hypervolume perturbation for the same candidate set]
			\label{prop:hv}
			Assume all objective vectors lie in the box
			$\prod_{j=1}^{d_{\mathrm{obj}}}[l_j,u_j]$, and let $V_{-e}=\prod_{j\ne e}(u_j-l_j)$ be the volume of the projection onto all coordinates except the estimated accuracy/error coordinate $e$. If
			$\epsilon_\infty=\max_{a\in\mathcal{P}}|S(a)-u(a)|$, then
			\begin{equation}
				|HV(\mathcal{Y})-HV(\widehat{\mathcal{Y}})|
				\le V_{-e}\epsilon_\infty.
				\label{eq:hvdet}
			\end{equation}
			Consequently, if $b_{\mathcal{P}}=\max_{a\in\mathcal{P}}|B(a)|$,
			\begin{equation}
				\E|HV(\mathcal{Y})-HV(\widehat{\mathcal{Y}})|
				\le
				V_{-e}\left[
				b_{\mathcal{P}}+
				\sEval\sqrt{2\log(2P)}
				\right].
				\label{eq:hvexp}
			\end{equation}
		\end{proposition}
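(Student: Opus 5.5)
The plan is to treat hypervolume as a function of a finite point set and to move only the error coordinate, one point at a time, checking that each such move changes the dominated region by a limited amount.

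\textbf{Deterministic bound.} Write $HV(\mathcal{Y})=\lambda\bigl(\bigcup_i [\mathbf{f}(a_i),\mathbf{r}]\bigr)$ with reference point $\mathbf{r}$, clipped to the box $\prod_j[l_j,u_j]$. For each fixed value $\mathbf{z}_{-e}$ of the non-error coordinates, take the slice of the dominated region along coordinate $e$. That slice is an interval $[\min_{i\in I(\mathbf{z}_{-e})} f_e(a_i),\,r_e]$, where $I(\mathbf{z}_{-e})$ is the set of candidates whose other coordinates are weakly below $\mathbf{z}_{-e}$.

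Replacing $\mathbf{f}$ by $\widehat{\mathbf{f}}$ leaves every other coordinate unchanged. So $I(\mathbf{z}_{-e})$ is the same for both sets; only the values being minimised change. Since each value moves by at most $|\widehat f_e(a_i)-f_e(a_i)|=|S(a_i)-u(a_i)|\le\epsilon_\infty$, the minimum of the values moves by at most $\epsilon_\infty$. Hence the slice lengths differ by at most $\epsilon_\infty$. If proxy values fall outside $[l_e,u_e]$, clip them to the box first; this is a $1$-Lipschitz operation, so the bound still holds.

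Integrate this slice-wise bound over $\mathbf{z}_{-e}$ in the projected box, whose volume is $V_{-e}$. Fubini's theorem then gives \eqref{eq:hvdet}. I regard this Fubini/slicing step as the key to the whole argument. It avoids any combinatorial analysis of how the front changes, and it works whether or not dominance relations flip.

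\textbf{Expectation bound.} Take expectations in \eqref{eq:hvdet}, so it suffices to bound $\E[\epsilon_\infty]$. By \eqref{eq:score-decomp}, $|S(a)-u(a)|\le |B(a)|+|Z(a)|\le b_{\mathcal{P}}+|Z(a)|$. This gives $\E\epsilon_\infty\le b_{\mathcal{P}}+\E\max_{a\in\mathcal{P}}|Z(a)|$.

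By Theorem~\ref{thm:conc}, each $Z(a)$ is sub-Gaussian with variance proxy $\sEval^2$, where $\sigma^2$ and $\rstr$ are taken uniformly over $\mathcal{P}$ as in Theorem~\ref{thm:rank}. The family $\{\pm Z(a)\}$ has $2P$ members. Apply the standard maximal inequality: for any $\lambda>0$, Jensen's inequality gives $\exp(\lambda\E\max)\le\sum\E e^{\lambda(\pm Z)}\le 2P e^{\lambda^2\sEval^2/2}$. Optimising over $\lambda$ yields $\E\max_a|Z(a)|\le\sEval\sqrt{2\log(2P)}$. This requires no independence across candidates, which matters because all candidates share the same $\Dev$.

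\textbf{Main obstacle.} The delicate point is conditioning. The sub-Gaussian claim of Theorem~\ref{thm:conc} holds conditionally on $\theta_c(a)$, so I would condition on all the proxy-trained parameters first. Under that conditioning $B(a)$ is fixed, and the maximal inequality applies to the $\Dev$-randomness alone. Integrating out the conditioning afterwards then gives \eqref{eq:hvexp}, provided $b_{\mathcal{P}}$ is read as a bound that holds almost surely, or as its conditional value.
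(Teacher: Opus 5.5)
Your proof is correct and follows the paper's route. The paper only asserts that the dominated region changes inside a slab of thickness $\epsilon_\infty$ with cross-section at most $V_{-e}$; your slice-by-slice Fubini argument is the rigorous version of that claim. Your expectation bound likewise matches the paper's: the same decomposition $\epsilon_\infty\le b_{\mathcal{P}}+\max_a|Z(a)|$ and the standard sub-Gaussian maximal inequality over the $2P$ variables $\pm Z(a)$, which needs no independence. You also condition on the proxy-trained parameters $\theta_c(a)$ and read $b_{\mathcal{P}}$ as its conditional value, a point the paper leaves implicit.
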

		
		\begin{proof}
			Changing only one coordinate of every point by at most $\epsilon_\infty$ changes the dominated region only inside a slab of thickness at most $\epsilon_\infty$ along that coordinate. The slab's cross-sectional volume is at most $V_{-e}$, which gives \eqref{eq:hvdet}. From \eqref{eq:score-decomp},
			$\epsilon_\infty\le b_{\mathcal{P}}+\max_a|Z(a)|$.
			The standard maximal inequality for $P$ sub-Gaussian variables with uniform variance proxy $\sEval^2$ gives
			$\E[\max_a|Z(a)|]\le\sEval\sqrt{2\log(2P)}$; independence is not required for this upper bound.
		\end{proof}
		
		Theorem~\ref{thm:front} and Proposition~\ref{prop:hv} are population-level statements. They do not bound the final hypervolume difference between two complete evolutionary runs that visit different candidate sets. In the experiments, the relationship between ranking quality and end-of-run hypervolume is therefore tested empirically rather than presented as a theorem.
		
		\subsection{Derived Implementation Rules}
		
		The preceding analysis explains how the score behaves and provides two rules that can be used directly in the algorithm. The first sets the fusion weight between the GP and proxy from measured residual statistics. The second calibrates the capacity-adaptive distillation slope to cancel the component of bias aligned with model capacity.
		
		\begin{proposition}[Clipped variance-aware fusion]
			\label{prop:beta}
			Suppose the normalized GP score and normalized TGL score are both compared with the same fully evaluated screening target on the current archive. After centering the resulting residual sequences, let their residual standard deviations be $\sigma_{r,\mathrm{gp}}$ and $\sigma_{r,\mathrm{tgl}}$, with residual correlation $\varrho$. These residual quantities are distinct from the GP posterior uncertainty $\sigma_{\mathrm{gp}}(a)$ used in the exploration term. For the convex combination
			$\beta S_{\mathrm{gp}}+(1-\beta)S_{\mathrm{tgl}}$, $\beta\in[0,1]$, the variance-minimizing coefficient is
			\begin{equation}
				\beta^\star=
				\Pi_{[0,1]}
				\left(
				\frac{\sigma_{r,\mathrm{tgl}}^2-\varrho\sigma_{r,\mathrm{gp}}\sigma_{r,\mathrm{tgl}}}
				{\sigma_{r,\mathrm{gp}}^2+\sigma_{r,\mathrm{tgl}}^2
					-2\varrho\sigma_{r,\mathrm{gp}}\sigma_{r,\mathrm{tgl}}}
				\right),
				\label{eq:beta}
			\end{equation}
			where $\Pi_{[0,1]}(x)=\min\{1,\max\{0,x\}\}$ denotes projection onto $[0,1]$.
		\end{proposition}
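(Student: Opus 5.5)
The plan is to write the fused residual as a linear combination of the two centred residuals and minimise its variance over $\beta$, a one-dimensional convex quadratic. Let $r_{\mathrm{gp}}$ and $r_{\mathrm{tgl}}$ denote the centred residuals of the normalized GP and TGL scores against the common fully evaluated screening target. Because both scores are compared with the same target and $\beta+(1-\beta)=1$, the residual of the convex combination $\beta S_{\mathrm{gp}}+(1-\beta)S_{\mathrm{tgl}}$ is exactly $\beta r_{\mathrm{gp}}+(1-\beta)r_{\mathrm{tgl}}$. Its mean is zero after centering, so only the variance matters. I would stress this step, since it is where the hypothesis ``same target'' is used: without it the target would not cancel from the combination.

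Next, expanding with the residual covariance $\varrho\sigma_{r,\mathrm{gp}}\sigma_{r,\mathrm{tgl}}$ gives
\[
V(\beta)=\beta^2\sigma_{r,\mathrm{gp}}^2+(1-\beta)^2\sigma_{r,\mathrm{tgl}}^2+2\beta(1-\beta)\varrho\sigma_{r,\mathrm{gp}}\sigma_{r,\mathrm{tgl}}.
\]
This is a quadratic in $\beta$ with leading coefficient $D:=\sigma_{r,\mathrm{gp}}^2+\sigma_{r,\mathrm{tgl}}^2-2\varrho\sigma_{r,\mathrm{gp}}\sigma_{r,\mathrm{tgl}}$. Since $D=\Var(r_{\mathrm{gp}}-r_{\mathrm{tgl}})\ge0$, $V$ is convex. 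Setting $V'(\beta)=0$ yields the unconstrained minimizer $\beta_0=(\sigma_{r,\mathrm{tgl}}^2-\varrho\sigma_{r,\mathrm{gp}}\sigma_{r,\mathrm{tgl}})/D$, which matches the argument of $\Pi_{[0,1]}$ in \eqref{eq:beta}.

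Finally, I would handle the constraint $\beta\in[0,1]$. When $D>0$, $V$ is strictly convex, so it is decreasing to the left of $\beta_0$ and increasing to the right. The minimizer over the interval is therefore $\beta_0$ if $\beta_0\in[0,1]$, $0$ if $\beta_0<0$, and $1$ if $\beta_0>1$, which is exactly $\Pi_{[0,1]}(\beta_0)$.

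The main obstacle is the degenerate case $D=0$. This holds exactly when $r_{\mathrm{gp}}=r_{\mathrm{tgl}}$ almost surely, i.e.\ $\varrho=1$ and $\sigma_{r,\mathrm{gp}}=\sigma_{r,\mathrm{tgl}}$. Then $V$ is constant, every $\beta$ is optimal, and the formula is undefined. I would state that in this case any $\beta$, for instance a convention such as $1/2$, is variance-minimizing, so the proposition holds under the implicit nondegeneracy $D>0$. I would also remark that using empirical residual statistics from the archive makes $\beta^\star$ a plug-in estimate, and the optimality statement is for the population quantities.
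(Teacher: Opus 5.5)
Your proof is correct and follows the same route as the paper's: the fused residual variance is a convex quadratic in $\beta$, its stationary point is the fraction inside \eqref{eq:beta}, and projection onto $[0,1]$ gives the constrained minimizer. Your extra details fill in points the paper's brief proof leaves implicit: the shared target cancels because $\beta+(1-\beta)=1$, the leading coefficient equals $\Var(r_{\mathrm{gp}}-r_{\mathrm{tgl}})\ge0$, and the degenerate case $D=0$, where the formula is undefined and every $\beta$ is optimal, has to be excluded.
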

		
		\begin{proof}
			The residual variance is a convex quadratic in $\beta$. Its unconstrained stationary point is the fraction inside the parentheses in \eqref{eq:beta}. Projection onto $[0,1]$ gives the constrained minimum. If the stationary point lies outside the interval, the optimum is one of the two endpoints.
		\end{proof}
		
		The residual correlation $\varrho$ is estimated from candidates in the fully evaluated archive $\mathcal{E}$ for which both normalized screening scores and the same fully evaluated reference target are available. Because this is a variance rule rather than a complete ranking-risk optimum, the coefficient is recalculated from the growing archive in every generation.
		
		\begin{proposition}[First-order removal of capacity-aligned signed bias]
			\label{prop:alpha}
			Let $\pi(a)\in[0,1]$ be the normalized architecture-capacity variable. Suppose that, over a pilot set spanning the capacity range, the signed bias in \eqref{eq:signed-bias} has the first-order form
			\begin{equation}
				B(a)
				=
				B_0+
				\left(\gamma_1+\kappa_T\alpha_1\right)\pi(a)
				+r(a),
				\label{eq:alpha-model}
			\end{equation}
			where $\gamma_1$ is the capacity-aligned slope without adaptive teacher weighting, $\kappa_T\ne0$ is the first-order change in that slope produced by the adaptive KD coefficient, and $r(a)$ contains higher-order residual terms. Then
			\begin{equation}
				\alpha_1^\star=-\gamma_1/\kappa_T
				\label{eq:alphastar}
			\end{equation}
			removes the linear capacity-aligned component when $\alpha_1^\star$ lies in the feasible interval imposed by $\alpha(a)\in[0,1]$.
		\end{proposition}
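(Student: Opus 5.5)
The plan is a direct substitution argument. Under the first-order model \eqref{eq:alpha-model}, the coefficient multiplying $\pi(a)$ is the affine function $\lambda(\alpha_1)=\gamma_1+\kappa_T\alpha_1$. Because $\kappa_T\ne0$, this function has exactly one root, $\alpha_1^\star=-\gamma_1/\kappa_T$. Substituting it gives
\begin{equation*}
B(a)=B_0+r(a),
\end{equation*}
so the linear capacity-aligned component is gone. The constant $B_0$ is common to every candidate and plays the role of $\bar B$ in \eqref{eq:bias-split}. By Theorem~\ref{thm:rank} and Remark~\ref{rem:teacher}, it cancels in every pairwise comparison. The differential bias that remains is therefore $\xi(a)=r(a)-\overline{r}$, and the pairwise bound $\bar\xi$ is governed by the spread of $r$ alone, with no term $|\gamma_1|$ from capacity.

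The second step interprets ``removes the linear component'' in least-squares terms, so the result also holds when $r$ is not exactly orthogonal to $\pi$. On the pilot set, regress $B(a)$ on $\pi(a)$. The fitted slope is the covariance
\begin{equation*}
\widehat{\mathrm{Cov}}(B,\pi)=\lambda(\alpha_1)\,\widehat{\Var}(\pi)+\widehat{\mathrm{Cov}}(r,\pi),
\end{equation*}
divided by $\widehat{\Var}(\pi)$. I would take $r$ to be defined as the residual after projecting onto $\mathrm{span}\{1,\pi\}$, so that $\widehat{\mathrm{Cov}}(r,\pi)=0$. With that definition the slope vanishes exactly at $\alpha_1^\star$.

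The third step handles feasibility. The method clips $\alpha(a)=\alpha_0+\alpha_1\pi(a)$ to $[0,1]$. Since $\pi\in[0,1]$, this map is affine and never actually clipped exactly when $\alpha_0\in[0,1]$ and $\alpha_0+\alpha_1\in[0,1]$. Under those conditions \eqref{eq:alpha-model} applies without distortion, and $\alpha_1^\star$ is realizable. Outside that interval, clipping introduces a piecewise term, and the claim is stated only conditionally, which is exactly the proviso in the proposition.

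I expect the main obstacle to be justifying that $\kappa_T$ and $\gamma_1$ do not themselves depend on $\alpha_1$. The model \eqref{eq:alpha-model} is a first-order (linearized) expansion of $B$ in $\alpha_1$ around $\alpha_1=0$. The proof therefore only asserts removal up to the $O(\alpha_1^2)$ terms absorbed into $r(a)$. I would state this explicitly rather than claim exact debiasing.
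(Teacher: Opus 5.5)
Your proposal is correct, and its core is exactly the paper's proof: the coefficient of $\pi(a)$ in \eqref{eq:alpha-model} is the affine map $\gamma_1+\kappa_T\alpha_1$, and because $\kappa_T\ne0$, setting it to zero gives the unique root $\alpha_1^\star=-\gamma_1/\kappa_T$. The paper adds only that, when $\alpha_1^\star$ is infeasible, the nearest endpoint of the feasible interval (your $[-\alpha_0,\,1-\alpha_0]$) minimizes $|\gamma_1+\kappa_T\alpha_1|=|\kappa_T|\,|\alpha_1-\alpha_1^\star|$, so your least-squares reading and ranking consequences are optional extras; note also that they read $r(a)$ in two incompatible ways, as orthogonal to $\pi$ in one and as possibly holding $\pi$-aligned $O(\alpha_1^2)$ terms in the other.
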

		
		\begin{proof}
			The coefficient multiplying $\pi(a)$ in \eqref{eq:alpha-model} is
			$\gamma_1+\kappa_T\alpha_1$. Setting it to zero gives \eqref{eq:alphastar}. If the solution is infeasible, the closest feasible endpoint minimizes the absolute first-order slope.
		\end{proof}
		
		Equation~\eqref{eq:alphastar} does not imply that $\alpha_1$ must always be positive. A positive adaptive slope is justified only when the pilot estimates show that $\gamma_1$ and $\kappa_T$ have opposite signs. In the implementation, $\alpha_1$ is calibrated on a small pilot set and then clipped so that $\alpha(a)\in[0,1]$ for every architecture.
		
		\subsection{Scope and Limitations}
		\label{sec:limits}
		
		The guarantees above have five explicit limits. First, Assumptions~\ref{as:linear} and \ref{as:drift} describe a short-training linearized regime rather than unrestricted deep-network optimization. Second, Assumption~\ref{as:bias} is needed to convert an optimization residual into an accuracy-bias bound; without a margin-transfer condition, classification accuracy is not a Lipschitz function of cross-entropy loss. Third, Lemma~\ref{lem:strat} applies to the independent evaluation set $\Dev$ after conditioning on the trained predictor; it does not by itself prove that stratified \emph{training} reduces model variance. Fourth, the ranking theorem permits dependence between candidate evaluation noises, but still assumes each conditional tail bound is valid. Fifth, Theorem~\ref{thm:front} and Proposition~\ref{prop:hv} concern a fixed candidate population; they do not capture error propagation through later variation operators or the value-sensitive crowding-distance calculation of NSGA-II. Accordingly, the experiments are designed to test the consequences of these guarantees rather than to verify the assumptions directly, and the results should be interpreted within these stated limits.
		\section{Experimental Setup}
		\label{sec:expdesign}
		
		The experiments follow the theoretical chain developed in Section~\ref{sec:theory}. A good final Pareto front is important, but it does not explain whether the low-fidelity evaluator worked as intended. We therefore begin by testing whether the proxy preserves the ordering of candidates. We then isolate the mechanisms responsible for that ordering before examining the complete constrained search. This sequence allows the final search quality to be interpreted in light of the proxy behaviour rather than from hypervolume alone.
		
		\subsection{Evaluation Plan}
		
		Table~\ref{tab:evidence} summarizes this progression from the low-fidelity estimator to the complete search. Repeated scoring of a fixed, fully evaluated architecture set first measures whether the proxy preserves candidate order. Component ablations then show how stratification, curriculum, and teacher quality affect that order. Equal-budget searches finally measure Pareto-front quality, constraint satisfaction, computational cost, and the properties of the selected architectures. These search-level results are empirical outcomes. They are not treated as a proof of Theorem~\ref{thm:front}, which applies only to a fixed candidate population.
		
		\begin{table}[!htbp]
			\centering
			\caption{Sequence of experimental analyses and the corresponding evidence.}
			\label{tab:evidence}
			\footnotesize
			\setlength{\tabcolsep}{2.5pt}
			\begin{tabularx}{\columnwidth}{@{}>{\raggedright\arraybackslash}p{2.25cm}>{\raggedright\arraybackslash}X@{}}
				\toprule
				\textbf{Analysis} & \textbf{Evidence} \\
				\midrule
				Ranking fidelity & Predicted and measured Kendall-$\tau$ on the same fixed architecture set. \\[2pt]
				Component contribution & Random, difficulty-only, no-curriculum, and no-teacher ablations with variance and Kendall-$\tau$. \\[2pt]
				Teacher robustness & Strong, uniformly weak, and selectively weak teachers with common bias, differential bias, and Kendall-$\tau$. \\[2pt]
				Constrained search & Hypervolume, generational distance, spread, feasible rate, FPR, and set coverage under an equal full-evaluation budget. \\[2pt]
				Efficiency and selection & Wall-clock cost and Tchebycheff-ranked deployable architectures. \\
				\bottomrule
			\end{tabularx}
		\end{table}
		
		\subsection{Benchmarks}
		
		To instantiate this evaluation sequence, we use two audio-classification tasks. Audio is well suited to the study because it combines an expensive accuracy objective, a model-size objective, and a false-positive-rate (FPR) constraint with clear operational meaning. The first task is keyword spotting (KWS) on Google Speech Commands v2 (GSC) \cite{warden2018speech}, a standard TinyML benchmark for small-footprint recognition \cite{garai2026kwsreview}. It is used for both fixed-set ranking analysis and constrained architecture search. The second task is bird-call classification on BirdCLEF~2021 \cite{kahl2021birdnet}, where compact detectors are commonly obtained through multi-objective search and distillation \cite{ghosh2026birdkd}. BirdCLEF has stronger background variation and places greater practical importance on FPR. A detector that frequently responds to noise is not useful for continuous monitoring, even when its classification accuracy is high. For both tasks, audio is converted to log-mel spectrograms and classified by a compact residual convolutional network. The architecture encoding is discrete, and reliable accuracy is available only after training, so both tasks are genuine expensive combinatorial NAS problems.
		
		\subsection{Search Space and Experimental Protocol}
		\label{sec:setup}
		
		With the benchmark tasks established, we next define a common search protocol. The architecture encoding controls the number of convolutional layers, filter count, kernel size, batch normalization, dropout, and number of fully connected layers in the residual convolutional backbone of Fig.~\ref{fig:arch}. Skip connections support the training of deeper variants. Global average pooling reduces the parameter count and also acts as a structural regularizer, both of which are useful under a TinyML memory budget. The highlighted quantities in the figure correspond directly to the discrete design variables in Section~\ref{sec:problem}. NSGA-II, surrogate-assisted NSGA-II (SA-NSGA-II), and TGL-NSGA-II use the same population size and variation operators. Multi-objective Bayesian optimization (MOBO) retains its own acquisition procedure. In all surrogate-assisted methods, the GP is fitted only to architectures that have been fully evaluated.
		
		\begin{figure}[!htbp]
			\centering
			\includegraphics[width=0.92\linewidth]{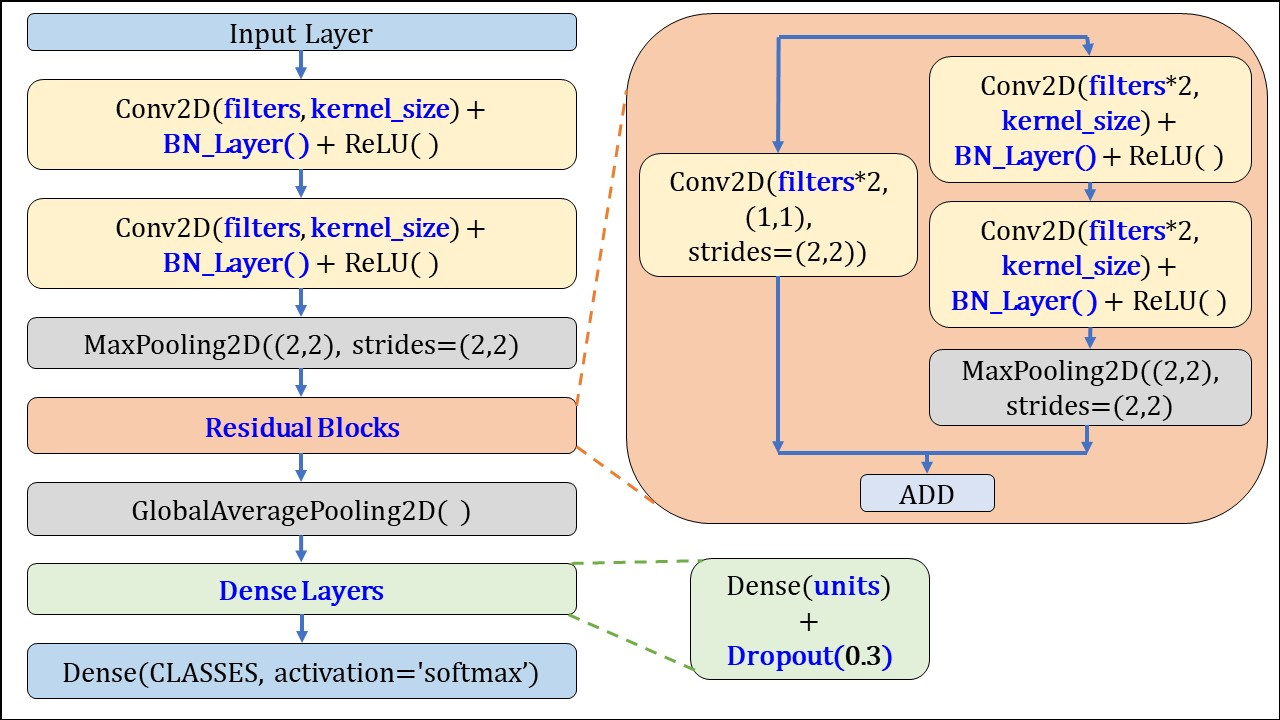}
			\caption{Residual convolutional backbone used for spectrogram-based audio classification. Bold blue tokens (\texttt{filters}, \texttt{kernel\_size}, batch-normalization flags, residual-block count, dense-layer \texttt{units}, and dropout) are the discrete design variables optimized by the search; a global average pooling layer precedes the softmax classifier to keep the parameter count within the TinyML budget.}
			\label{fig:arch}
		\end{figure}
		
		The default TGL compact training fraction is $\varphi=0.03$. The short curriculum runs three stages, each trained to convergence with early stopping and capped at $t_{\max}=15$ SGD steps, so the total proxy budget is $T_c\le45$ (typically much smaller). The proxy-training set $\Dtr$ and proxy-evaluation set $\Dev$ are disjoint, as required by Theorem~\ref{thm:conc}. For fair comparison, the primary budget is the number of expensive full-training evaluations. Wall-clock time is reported as a secondary measure because GP fitting and TGL scoring add different amounts of overhead.
		
		\emph{Scoring weights.} Section~\ref{sec:method} introduces several convex-combination weights: $\boldsymbol{\omega}$ for sample difficulty in \eqref{eq:difficulty}, $\boldsymbol{\psi}$ for fidelity in \eqref{eq:sfid}, $\rho$ for the balance between fidelity and performance in $S_{\mathrm{tgl}}$, $\nu$ for the balance between accuracy and FPR in \eqref{eq:sgp}, and $\eta_e$ for exploration in \eqref{eq:sfinal}. Two quantities are obtained from the analysis rather than tuned manually. Proposition~\ref{prop:beta} sets the fusion weight $\beta^\star$ in each generation, while Proposition~\ref{prop:alpha} determines the capacity-adaptive slope $\alpha_1$. For the remaining weights, we use equal values: $\boldsymbol{\omega}=\boldsymbol{\psi}=[\tfrac13,\tfrac13,\tfrac13]$ and $\rho=\nu=\eta_e=\tfrac12$. This neutral choice is reasonable because every component is normalized to $[0,1]$ and there is no prior reason to favour one cue. It also avoids task-specific tuning that could confound the ablations. A deployment with a stricter false-alarm requirement can instead increase $1-\nu$ or the FPR contribution controlled by $\rho$.
		
		\begin{table}[!htbp]
			\centering
			\caption{Main search configuration used for all reported runs. A dash (---) indicates that a parameter is not applicable to a method.}
			\label{tab:config}
			\scriptsize
			\setlength{\tabcolsep}{1.4pt}
			\begin{tabularx}{\columnwidth}{@{}>{\raggedright\arraybackslash}Xcccc@{}}
				\toprule
				\textbf{Parameter} & \shortstack{\textbf{NSGA-}\\\textbf{II}} & \textbf{MOBO} & \shortstack{\textbf{SA-}\\\textbf{NSGA-II}} & \textbf{TGL} \\
				\midrule
				Population $P$ & 40 & --- & 40 & 40 \\
				Generations $G$ & 50 & 50 & 50 & 50 \\
				Crossover / mutation & 0.9 / 0.1 & --- & 0.9 / 0.1 & 0.9 / 0.1 \\
				SBX / mutation index & 15 / 20 & --- & 15 / 20 & 15 / 20 \\
				Surrogate & --- & GP & GP & GP \\
				Promotion fraction $k$ & --- & --- & 20\% & 20\% \\
				KD temperature $T_{\mathrm{KD}}$ & --- & --- & --- & $2$ to $5$ \\
				$\alpha_0$ & --- & --- & --- & 0.3 \\
				Difficulty weights $\boldsymbol{\omega}$ & --- & --- & --- & $[\tfrac13,\tfrac13,\tfrac13]$ \\
				Fidelity weights $\boldsymbol{\psi}$ & --- & --- & --- & $[\tfrac13,\tfrac13,\tfrac13]$ \\
				Balances $\rho,\nu,\eta_e$ & --- & --- & --- & $\tfrac12$ each \\
				Fusion weight $\beta^\star$ & --- & --- & --- & Prop.~\ref{prop:beta} \\
				Adaptive slope $\alpha_1$ & --- & --- & --- & Prop.~\ref{prop:alpha} \\
				Compact fraction $\varphi$ & --- & --- & --- & 3\% \\
				Proxy steps per stage (cap $t_{\max}$) & --- & --- & --- & 15 \\
				Constraint penalty $\zeta_t$ & --- & --- & --- & increasing schedule \\
				\bottomrule
			\end{tabularx}
		\end{table}
		
		\subsection{Baselines and Ablations}
		
		Once the common protocol is fixed, the comparison methods can be used to separate the contributions of evolutionary search, surrogate assistance, and low-fidelity evaluation. Full NSGA-II provides the standard evolutionary baseline \cite{deb2002nsga2}. Multi-objective Bayesian optimization (MOBO) follows the TinyML formulation of Deutel \emph{et al.} \cite{deutel2023mobo}. SA-NSGA-II serves as the surrogate-only control. It uses the same NSGA-II population and variation operators as TGL-NSGA-II, but candidate promotion depends only on the GP surrogate. Its construction follows the surrogate-assisted comparison and rank-prediction literature \cite{tian2023pairwise,xue2024tcmr}. The random-subset proxy is a matched low-fidelity control in which candidates are trained and evaluated on randomly sampled compact data, representing reduced-training performance estimators \cite{kyriakides2020reduced}. Zero-cost NSGA-II uses a zero-cost NAS proxy in the style of Abdelfattah \emph{et al.} \cite{abdelfattah2021zerocost}. TGL-NSGA-II is the proposed method.
		
		The ablations then examine the proposed evaluator one component at a time. Joint stratification is first replaced with random sampling. Class refinement is removed in a second variant, leaving only difficulty strata. A third variant removes curriculum staging while retaining the same number of proxy steps, and a fourth removes teacher guidance. Separate teacher-quality controls compare a strong teacher with uniformly and selectively weakened teachers while keeping their average quality as close as possible.
		
		\subsection{Evaluation Metrics}
		
		These comparisons require metrics at both the ranking and search levels. Ranking quality is measured by Kendall-$\tau$, Spearman rank correlation $\rho_s$, and Top-5 overlap. Kendall-$\tau$ is the primary measure because it appears directly in Theorem~\ref{thm:rank}. Top-5 overlap shows whether the low-fidelity evaluator identifies the same strongest candidates as full evaluation. For the diagnostic across generations, Spearman correlation is mapped from $[-1,1]$ to $[0,1]$ and then averaged with Top-5 overlap. This combined score complements the fixed-set Kendall-$\tau$ analysis rather than replacing it. The Top-5 models are defined after applying the same Tchebycheff scalarization to the proxy and fully evaluated objective vectors, which makes the two ranking views directly comparable in each generation.
		
		Search quality is measured using hypervolume (HV), generational distance (GD), spread, and feasible rate (FR). Higher values are preferred for HV and FR, while lower values are preferred for GD and spread. FPR is reported separately because it is an operational constraint in the audio tasks. These search-level indicators are interpreted together with ranking fidelity. This prevents a favourable final front from being attributed to the low-fidelity mechanism without direct evidence that candidate order was preserved.
		
		
		\subsection{Reporting Protocol}
		\label{sec:statistics}
		
		The reporting protocol follows naturally from these two levels of evaluation. Unless a figure is identified as a representative single-run diagnostic, search metrics are reported as mean\,$\pm$\,standard deviation over five independent random seeds. The same seeds are used across methods to keep the comparison controlled. The fixed-set tables instead report aggregate estimates from repeated proxy evaluations. In those tables, measured Kendall-$\tau$ is compared directly with its predicted lower bound, while the ablations are interpreted through evaluation variance, differential bias, and Kendall-$\tau$. We describe these patterns as empirical differences and do not claim statistical significance without a corresponding inferential test.

		\section{Experimental Results}
		\label{sec:results}
		
		Following the experimental design, we present the results in four stages: fixed-set ranking fidelity, component ablations, complete constrained search, and computational cost. This order separates the quality of the low-fidelity evaluator from the final search outcome and helps explain why the method succeeds or fails.
		
		\subsection{Proxy Ranking Fidelity}
		
		A fixed architecture set is sampled before the search and fully trained to obtain the reference order $u(a)$. The same candidates are then scored repeatedly by TGL using independent draws of $\Dev$. The quantities $\rstr$, $\sigma^2$, and $\bar\xi$ are estimated using the pilot procedure described after Theorem~\ref{thm:rank}, and Eq.~\eqref{eq:tau} gives the corresponding lower bound on Kendall-$\tau$.
		
		\begin{table}[!htbp]
			\centering
			\caption{Predicted and measured ranking fidelity on the fixed architecture set. Values are aggregate estimates from repeated proxy evaluations.}
			\label{tab:rank-fidelity}
			\footnotesize
			\setlength{\tabcolsep}{2.8pt}
			\begin{tabular}{@{}lccccc@{}}
				\toprule
				\textbf{Dataset} & $\hat\rstr$ & $\hat\sigma^2$ & $\hat{\bar\xi}$ & $\tau_{\mathrm{pred}}$ & $\tau_{\mathrm{meas}}$ \\
				\midrule
				GSC & 0.42 & 0.0964 & 0.018 & 0.60 & 0.74 \\
				BirdCLEF & 0.34 & 0.1642 & 0.028 & 0.46 & 0.62 \\
				\bottomrule
			\end{tabular}
		\end{table}
		
		Table~\ref{tab:rank-fidelity} shows that the measured ranking fidelity exceeds the theoretical lower bound on both tasks. For GSC, the predicted Kendall-$\tau$ is $0.60$, whereas the measured value is $0.74$, a margin of $0.14$. BirdCLEF is more difficult: its lower stratification gain ($\hat\rstr=0.34$ versus $0.42$), larger score variance ($0.1642$ versus $0.0964$), and larger differential bias ($0.028$ versus $0.018$) reduce the predicted and measured Kendall-$\tau$ to $0.46$ and $0.62$, respectively. Nevertheless, the measured value remains $0.16$ above the predicted bound. The cross-dataset pattern is therefore intuitive: the noisier and more heterogeneous BirdCLEF task produces weaker candidate ordering, but the bound remains conservative in both cases.
		
		Figure~\ref{fig:surrogate-fidelity} provides a complementary view across generations. It compares SA-NSGA-II with the teacher-guided variant used in the audio runs. The figure retains the earlier label SA-KD-NSGA-II for this variant, which is now incorporated into TGL-NSGA-II. The teacher-guided score is strongest in the early generations, when the GP archive contains relatively few observations. The gap narrows as more architectures are fully evaluated and the surrogate improves. This behaviour is consistent with the intended role of TGL during surrogate cold start. The figure uses normalized Spearman correlation and Top-5 overlap rather than Kendall-$\tau$, so it should be read as supporting evidence rather than a direct test of Theorem~\ref{thm:rank}.
		
		\begin{figure}[!htbp]
			\centering
			\includegraphics[width=\linewidth]{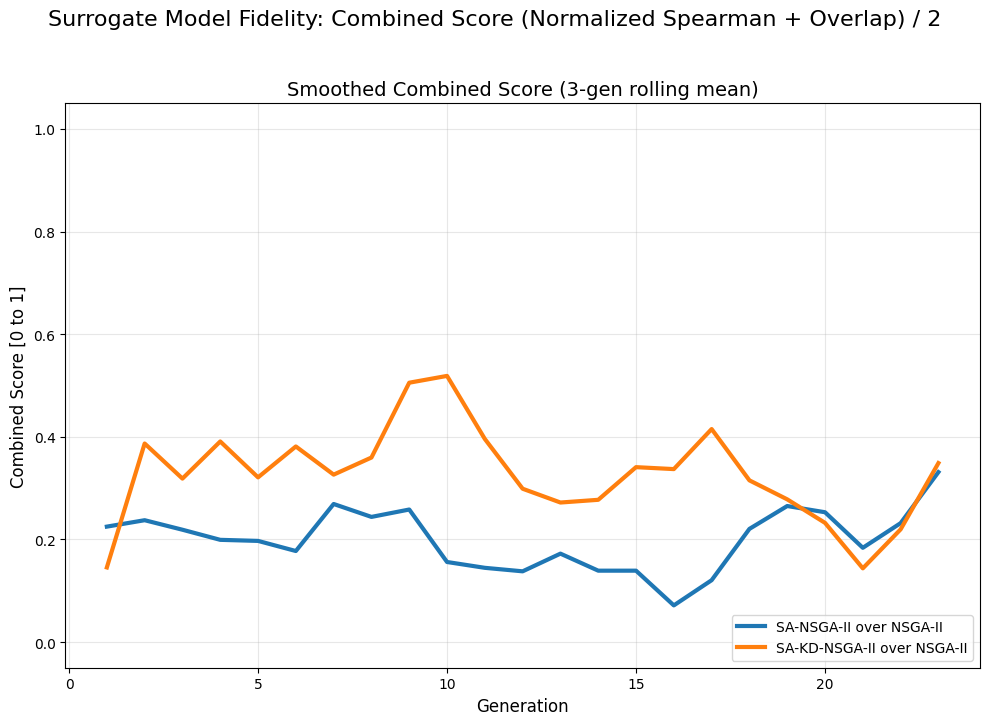}
			\caption{Generation-wise surrogate fidelity (average of normalized Spearman correlation and Top-5 overlap, three-generation rolling mean). The legacy label SA-KD-NSGA-II denotes the teacher-guided variant folded into TGL-NSGA-II.}
			\label{fig:surrogate-fidelity}
		\end{figure}
		
		The ranking results show that the low-fidelity evaluator preserves order. The next set of experiments explains which components produce this behaviour.
		
		\subsection{Component and Teacher Ablations}
		
		Table~\ref{tab:mechanism} isolates the contribution of each part of the evaluator on the same fixed architecture set. The first group compares random evaluation, difficulty-only strata, strata defined jointly by difficulty and class, removal of the curriculum, and removal of teacher guidance. The second group distinguishes uniformly weak supervision from selective teacher mismatch. This distinction matters because a common shift in candidate scores need not change their order, whereas architecture-dependent error directly changes pairwise differences.
		
		\begin{table}[!htbp]
			\centering
			\caption{Component and teacher-quality ablations on the fixed architecture set. Variance is normalized by random evaluation; lower bias terms and higher Kendall-$\tau$ are better. N/A indicates that a quantity is not applicable to that control.}
			\label{tab:mechanism}
			\footnotesize
			\setlength{\tabcolsep}{3pt}
			\begin{tabular}{@{}lcccc@{}}
				\toprule
				\textbf{Variant / Teacher} & $\Var(S)/\Var(S_{\mathrm{rand}})$ & $|\hat{\bar B}|$ & $\hat{\bar\xi}$ & Kendall-$\tau$ \\
				\midrule
				Full TGL (strong teacher) & 0.59 & 0.042 & 0.021 & 0.68 \\
				Random evaluation         & 1.00 & N/A & N/A & 0.56 \\
				Difficulty strata only    & 0.71 & N/A & N/A & 0.63 \\
				Without curriculum        & 0.60 & N/A & N/A & 0.64 \\
				Without teacher guidance  & 0.83 & N/A & N/A & 0.49 \\
				Uniformly weak teacher   & N/A & 0.086 & 0.025 & 0.62 \\
				Selectively weak teacher & N/A & 0.082 & 0.061 & 0.41 \\
				\bottomrule
			\end{tabular}
		\end{table}
		
		Joint stratification reduces the normalized score variance from $1.00$ under random evaluation to $0.59$, corresponding to a $41\%$ reduction, and raises Kendall-$\tau$ from $0.56$ to $0.68$. Using difficulty strata without class refinement still helps, but its variance ratio of $0.71$ and Kendall-$\tau$ of $0.63$ are weaker than those of the full method. This shows that difficulty balancing provides most of the sampling benefit, while class refinement supplies an additional gain by preventing common classes from dominating individual strata.
		
		Removing curriculum produces a variance ratio of $0.60$, almost identical to the full method, but lowers Kendall-$\tau$ from $0.68$ to $0.64$. This is consistent with the design: curriculum changes the short training trajectory, not the variance of the independent evaluation sample. Removing teacher guidance has a much larger effect. Its variance ratio rises to $0.83$ and Kendall-$\tau$ falls to $0.49$, indicating that the teacher contributes both to more homogeneous sampling strata and to better short-training comparisons.
		
		The teacher-quality controls clarify that average bias alone is not the decisive quantity. Weakening the teacher uniformly approximately doubles the common bias from $0.042$ to $0.086$. Even so, differential bias remains small at $0.025$, and Kendall-$\tau$ decreases only moderately from $0.68$ to $0.62$. The selectively weak teacher has a similar common-bias magnitude of $0.082$, but its differential bias increases to $0.061$ and Kendall-$\tau$ falls to $0.41$. A teacher can therefore be imperfect in a roughly uniform way without destroying the candidate order. Selective mismatch is more harmful because it favours some architecture families over others.
		
		Together, these ablations explain where the ranking improvement comes from. We next examine whether the same improvement is visible in the complete constrained search.
		
		\subsection{Constrained Search Quality}
		
		The representative search traces provide the first view of end-to-end behaviour. In the KWS hypervolume trace of Fig.~\ref{fig:hv-kws}, TGL-NSGA-II improves steadily and finishes above full NSGA-II \cite{deb2002nsga2} and the surrogate-only run. MOBO \cite{deutel2023mobo} makes a sharp early jump and reaches a slightly larger HV in this particular run. The main feature of the TGL trajectory is therefore its steady improvement under a restricted evaluation budget, rather than dominance at every generation.
		
		\begin{figure}[!htbp]
			\centering
			\subfloat[KWS hypervolume.\label{fig:hv-kws}]{%
				\includegraphics[width=0.49\linewidth]{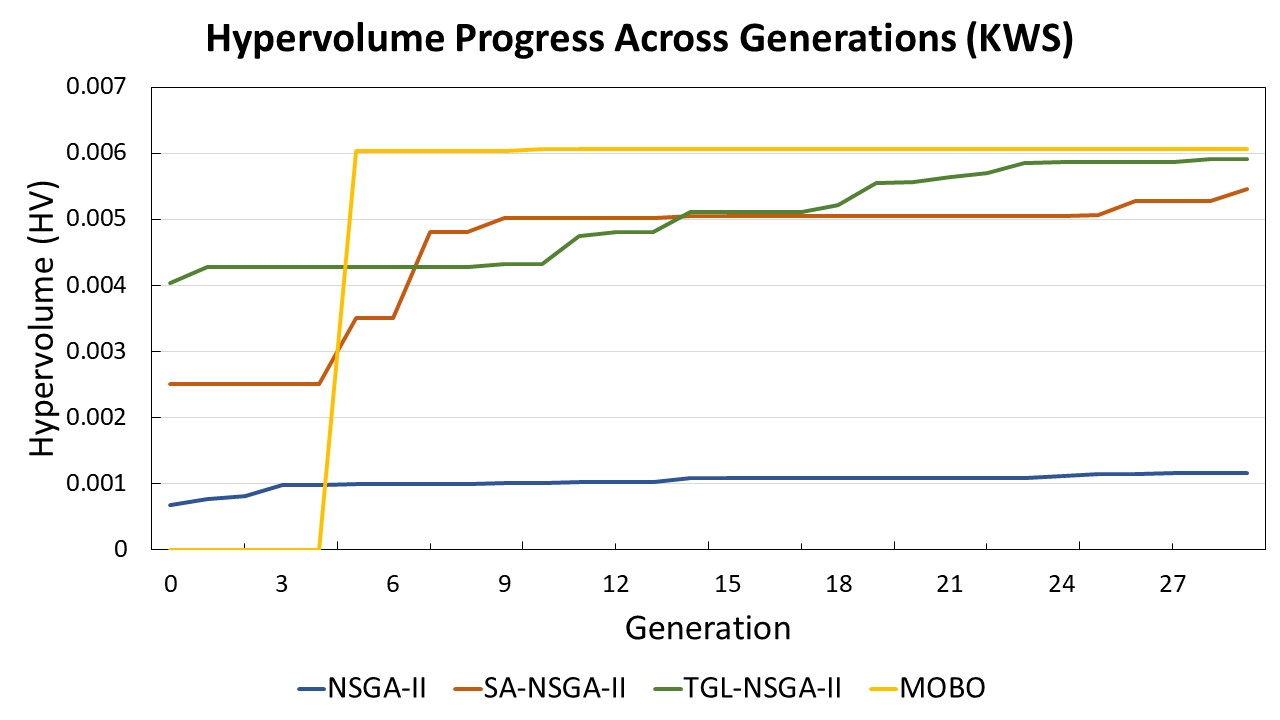}}
			\hfill
			\subfloat[Bird-call feasible rate.\label{fig:fr-bird}]{%
				\includegraphics[width=0.49\linewidth]{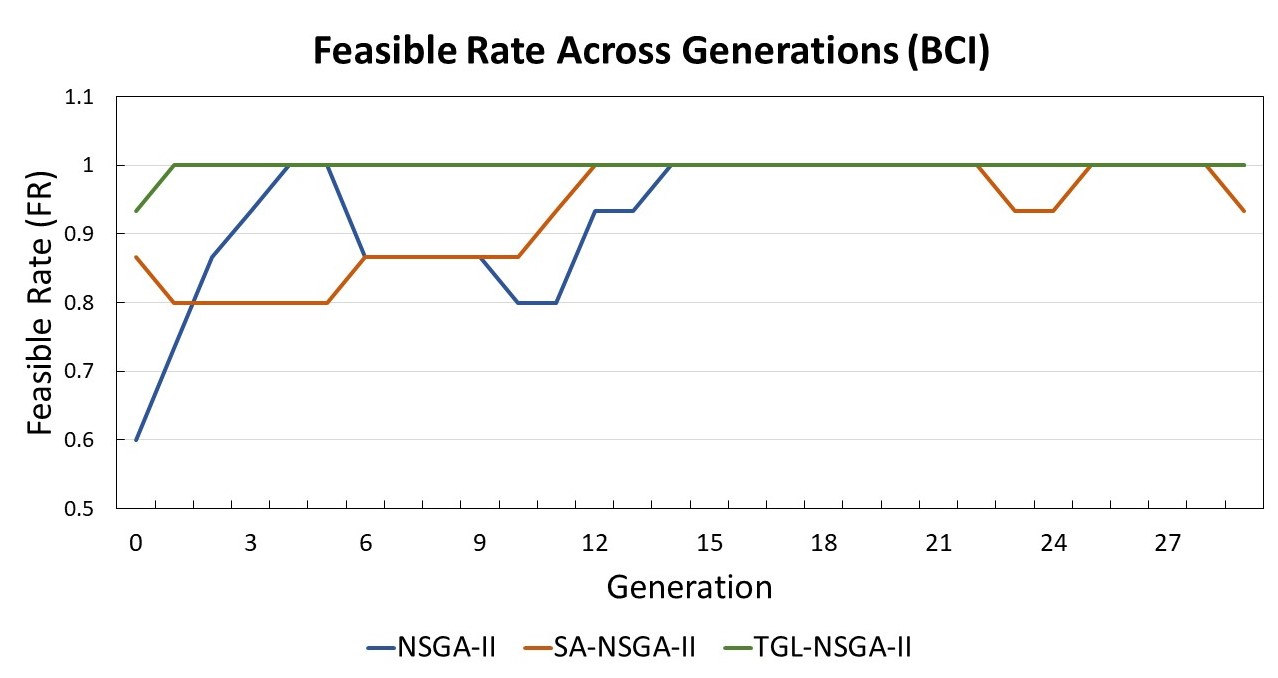}}
			\caption{Representative single-run audio search diagnostics. (a) HV rises steadily for TGL-NSGA-II; (b) TGL reaches a fully feasible population early and stays near one.}
			\label{fig:hv-fr}
		\end{figure}
		
		Figure~\ref{fig:fpr-bird} provides the corresponding operational view on BirdCLEF. TGL-NSGA-II has the lowest mean FPR among the four methods. This matters in continuous acoustic monitoring because frequent false alarms can make a model unusable even when its accuracy is high. TGL also shows the broadest balanced profile after model size and FPR are inverted so that outward values are better. NSGA-II performs well on inverted model size but is weaker in accuracy and FPR, which reinforces the need to report all objectives rather than accuracy alone.
		
		\begin{figure}[!htbp]
			\centering
			\subfloat[Mean FPR $\pm$ standard deviation.]{%
				\includegraphics[width=0.52\linewidth]{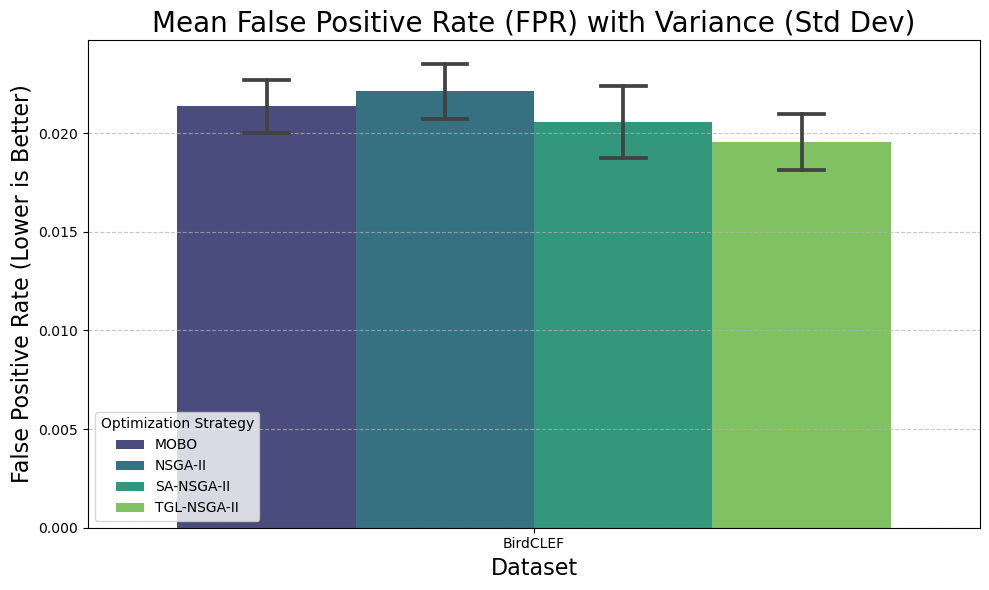}}
			\hfill
			\subfloat[Normalized profile.]{%
				\includegraphics[width=0.46\linewidth]{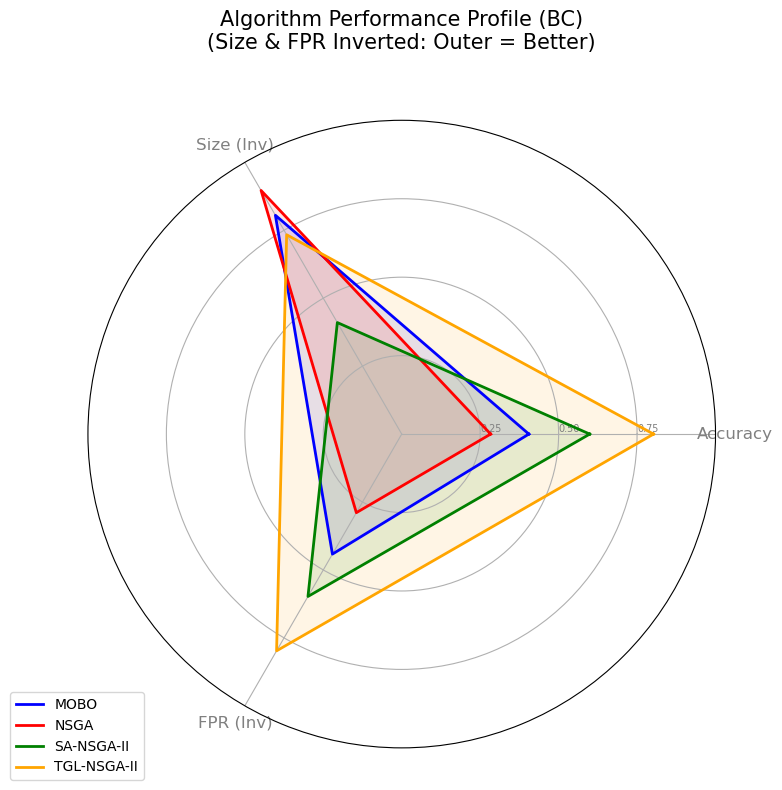}}
			\caption{BirdCLEF operational analysis. (a) TGL-NSGA-II attains the lowest mean FPR (lower is better). (b) Three-objective profile with model size and FPR inverted so larger radial values are better on all axes.}
			\label{fig:fpr-bird}
		\end{figure}
		
		The five-seed results in Table~\ref{tab:front-quality} provide a more stable comparison through hypervolume (HV), generational distance (GD), and spread. On KWS, TGL-NSGA-II attains the largest mean HV and the smallest mean GD. It therefore covers a favourable portion of the objective space while remaining closest to the reference front. Its spread is comparable to that of NSGA-II, although SA-NSGA-II produces the most uniform front. MOBO achieves a strong mean HV but has much larger GD and the worst spread among the conventional baselines. The zero-cost proxy also improves HV over plain NSGA-II, but its very large GD shows that many selected candidates remain far from the reference front. TGL is therefore strongest when coverage and convergence are considered together, even though it is not best on every individual indicator.
		
		\begin{table}[!htbp]
			\centering
			\caption{Pareto-front quality on the KWS runs (mean\,$\pm$\,standard deviation over five seeds; higher HV is better, lower GD and spread are better). Best mean per column in bold.}
			\label{tab:front-quality}
			\scriptsize
			\setlength{\tabcolsep}{2pt}
			\resizebox{\columnwidth}{!}{%
				\begin{tabular}{@{}lccc@{}}
					\toprule
					\textbf{Method} & \textbf{HV}$\uparrow$ & \textbf{GD}$\downarrow$ & \textbf{Spread}$\downarrow$ \\
					\midrule
					NSGA-II \cite{deb2002nsga2}            & $0.0049{\pm}0.0006$ & $0.0032{\pm}0.0004$ & $1.028{\pm}0.031$ \\
					MOBO \cite{deutel2023mobo}             & $0.0809{\pm}0.0071$ & $0.1210{\pm}0.0153$ & $1.218{\pm}0.048$ \\
					Zero-Cost NSGA-II \cite{abdelfattah2021zerocost} & $0.0714{\pm}0.0090$ & $6.290{\pm}0.812$ & $1.921{\pm}0.087$ \\
					SA-NSGA-II \cite{tian2023pairwise}     & $0.0878{\pm}0.0043$ & $0.0459{\pm}0.0061$ & $\mathbf{0.961{\pm}0.024}$ \\
					\textbf{TGL-NSGA-II}                   & $\mathbf{0.0935{\pm}0.0038}$ & $\mathbf{0.0019{\pm}0.0003}$ & $1.049{\pm}0.029$ \\
					\bottomrule
			\end{tabular}}
		\end{table}
		
		The pairwise set-coverage ($C$-metric) in Table~\ref{tab:cmat} provides a solution-level view of these fronts. TGL-NSGA-II dominates $100\%$ of the zero-cost front, $80\%$ of the SA-NSGA-II front, $50\%$ of the MOBO front, and $7.1\%$ of the NSGA-II front. In the reverse direction, no solution from NSGA-II, MOBO, or the zero-cost method dominates a TGL solution; SA-NSGA-II dominates $15.4\%$ of the TGL front. This asymmetry supports the favourable coverage of TGL while also showing that SA-NSGA-II retains a small complementary region of the objective space.
		
		\begin{table}[!htbp]
			\centering
			\caption{Set-coverage $C$-matrix on the KWS runs (fronts pooled over five seeds). Entry $(i,j)$ is the fraction of row-$i$ solutions dominated by at least one column-$j$ solution; diagonal entries are not applicable.}
			\label{tab:cmat}
			\scalebox{0.72}{
				\begin{tabular}{c|c|c|c|c|c}
					\hline
					& \textbf{TGL} & \textbf{NSGA-II} & \textbf{MOBO} & \textbf{SA-NSGA-II} & \textbf{Zero-Cost} \\ \hline\hline
					\textbf{TGL-NSGA-II} & N/A & 0.000 & 0.000 & 0.154 & 0.000 \\
					\textbf{NSGA-II}     & 0.071 & N/A & 0.071 & 0.000 & 0.000 \\
					\textbf{MOBO}        & 0.500 & 0.000 & N/A & 0.250 & 0.000 \\
					\textbf{SA-NSGA-II}  & 0.800 & 0.000 & 0.000 & N/A & 0.000 \\
					\textbf{Zero-Cost}   & 1.000 & 0.000 & 0.000 & 0.667 & N/A \\ \hline
				\end{tabular}
			}
		\end{table}
		
		The HV and feasible-rate traces in Fig.~\ref{fig:hv-fr} show that these advantages appear early and persist. TGL-NSGA-II rises quickly and remains stable, while the baselines fluctuate more strongly. Together with Tables~\ref{tab:rank-fidelity} and~\ref{tab:mechanism}, these search results show that better ranking fidelity is accompanied by better feasible-front quality. This association supports the practical value of the low-fidelity evaluator. It does not, however, turn Theorem~\ref{thm:front} into a convergence guarantee for the complete evolutionary trajectory.
		
		Search quality must also be weighed against the additional cost of KD-Lite. The final results subsection examines this trade-off and the models selected from the returned fronts.
		
		\subsection{Runtime and Selected Architectures}
		
		Table~\ref{tab:cost} reports wall-clock cost on an NVIDIA RTX~A4000 GPU with an Intel Xeon W-2245 CPU. Values are shown as mean\,$\pm$\,standard deviation over five seeds. Full NSGA-II is the slowest method because every candidate is fully trained. TGL-NSGA-II adds KD-Lite to surrogate filtering, so it is slower than SA-NSGA-II and MOBO. Even so, it reduces runtime from $1389.58$~s to $621.71$~s relative to full NSGA-II, which corresponds to a $2.2\times$ speed-up. The proxy remains much cheaper than full training because $\mathcal{D}_{\mathrm{tr}}$ contains only $3\%$ of the data pool and each curriculum stage has a fixed step cap. The value of this additional cost lies in the stronger ranking fidelity and the favourable combination of HV and GD in Table~\ref{tab:front-quality}, not in runtime alone.
		
		\begin{table}[!htbp]
			\centering
			\caption{Measured computational cost on the audio search (seconds; mean\,$\pm$\,standard deviation over five seeds).}
			\label{tab:cost}
			\footnotesize
			\begin{tabular}{@{}lc@{}}
				\toprule
				\textbf{Method} & \textbf{Cost (s)}$\downarrow$ \\
				\midrule
				NSGA-II \cite{deb2002nsga2}       & $1389.58{\pm}42.7$ \\
				MOBO \cite{deutel2023mobo}         & $147.10{\pm}9.8$ \\
				SA-NSGA-II \cite{tian2023pairwise} & $224.39{\pm}12.3$ \\
				\textbf{TGL-NSGA-II}               & $621.71{\pm}18.5$ \\
				\bottomrule
			\end{tabular}
		\end{table}
		
		Runtime alone does not determine which architecture should be deployed. We therefore rank the returned models by Tchebycheff scalarization over negated accuracy, model size, and FPR. Each objective is normalized using its minimum and maximum, and user preferences are represented by $\boldsymbol{\lambda}$. The selected model minimizes $T(\theta)=\max_k \lambda_k\hat f_k(\theta)$, which avoids choosing a favourable point by hand after the search. Table~\ref{tab:rank} lists the leading configurations for both tasks. With the reported balanced weights, \texttt{GSC\_MOBO-1} is the top KWS configuration because it combines a very low FPR with a compact model and only a modest accuracy loss. On BirdCLEF, \texttt{BC\_SA-NSGA-II-1} has the lowest overall Tchebycheff score. The TGL alternatives occupy a different operating region. In particular, \texttt{BC\_TGL-NSGA-II-3} has the highest accuracy and lowest FPR in the table, but its larger size increases its balanced score. It becomes attractive when recognition quality and false-alarm control are given more importance than memory. The table therefore represents task-dependent trade-offs rather than a single optimizer that is best for every deployment preference.
		
		\begin{table*}[!htbp]
			\centering
			\caption{Tchebycheff-ranked model configurations for both audio tasks (accuracy, size, and FPR reported as mean\,$\pm$\,standard deviation over five seeds). Hyperparameters are listed as \{filters, kernel size, use BN, residual blocks, FC layers, use dropout\}. Lower score is better.}
			\label{tab:rank}
			\footnotesize
			\begin{tabular}{c|c|c|c|c|c|c}
				\hline
				\textbf{Dataset} & \textbf{Model} & \textbf{Accuracy} & \textbf{Size (MB)} & \textbf{FPR} & \textbf{Hyperparameters} & \textbf{Score} \\ \hline\hline
				\multirow{6}{*}{\textbf{GSC}}
				& GSC\_MOBO-1        & $0.9121{\pm}0.0043$ & $0.2743{\pm}0.006$ & $0.0106{\pm}0.0012$ & \{32,3,0,1,1,1\} & \textbf{0.0006} \\ \cline{2-7}
				& GSC\_MOBO-2        & $0.9024{\pm}0.0051$ & $0.2743{\pm}0.006$ & $0.0101{\pm}0.0011$ & \{32,2,0,1,1,1\} & 0.0034 \\ \cline{2-7}
				& GSC\_SA-NSGA-II-1  & $0.9059{\pm}0.0047$ & $0.3236{\pm}0.008$ & $0.0105{\pm}0.0013$ & \{16,5,1,2,2,1\} & 0.0041 \\ \cline{2-7}
				& GSC\_NSGA-II-1     & $0.9093{\pm}0.0049$ & $0.1854{\pm}0.005$ & $0.0896{\pm}0.0071$ & \{16,5,0,1,1,1\} & 0.0064 \\ \cline{2-7}
				& GSC\_TGL-NSGA-II-1 & $0.8841{\pm}0.0055$ & $0.1191{\pm}0.004$ & $0.0129{\pm}0.0015$ & \{16,3,1,2,1,0\} & 0.0126 \\ \cline{2-7}
				& GSC\_TGL-NSGA-II-2 & $0.8797{\pm}0.0058$ & $0.0982{\pm}0.003$ & $0.0134{\pm}0.0016$ & \{32,3,0,1,1,0\} & 0.0141 \\ \hline\hline
				\multirow{6}{*}{\textbf{BirdCLEF'21}}
				& BC\_SA-NSGA-II-1   & $0.7752{\pm}0.0062$ & $0.1650{\pm}0.005$ & $0.0226{\pm}0.0019$ & \{16,3,0,2,2,0\} & \textbf{0.0219} \\ \cline{2-7}
				& BC\_MOBO-1         & $0.7821{\pm}0.0059$ & $0.2252{\pm}0.006$ & $0.0218{\pm}0.0018$ & \{16,5,0,1,2,1\} & 0.0229 \\ \cline{2-7}
				& BC\_NSGA-II-1      & $0.7860{\pm}0.0057$ & $0.1857{\pm}0.005$ & $0.0208{\pm}0.0018$ & \{16,5,0,1,1,0\} & 0.0236 \\ \cline{2-7}
				& BC\_TGL-NSGA-II-1  & $0.7897{\pm}0.0054$ & $0.1650{\pm}0.005$ & $0.0212{\pm}0.0017$ & \{16,3,0,2,2,0\} & 0.0243 \\ \cline{2-7}
				& BC\_TGL-NSGA-II-2  & $0.8124{\pm}0.0050$ & $0.3222{\pm}0.008$ & $0.0189{\pm}0.0016$ & \{16,5,0,2,2,0\} & 0.0524 \\ \cline{2-7}
				& BC\_TGL-NSGA-II-3  & $0.8154{\pm}0.0048$ & $0.3239{\pm}0.008$ & $0.0185{\pm}0.0015$ & \{16,3,1,2,3,1\} & 0.0530 \\ \hline\hline
			\end{tabular}
		\end{table*}
		
		\section{Discussion}
		\label{sec:discussion}
		
		Taken together, the results tell a consistent story. GSC has a larger stratification gain, lower score variance, and smaller differential bias than BirdCLEF. It also achieves the higher measured Kendall-$\tau$, with $0.74$ compared with $0.62$. Both measured values remain above their predicted lower bounds of $0.60$ and $0.46$. The bound is therefore conservative, yet it still reflects the greater difficulty of BirdCLEF. This gives more insight than rank correlation alone because the observed ordering can be connected to specific sources of uncertainty.
		
		The ablations then explain how this ranking behaviour is produced. Joint stratification reduces score variance by $41\%$ relative to random evaluation. Class refinement adds a smaller but consistent improvement over difficulty-only strata. Curriculum has little effect on evaluation variance, but it improves the candidate order. Teacher guidance is the most influential component because removing it produces the lowest Kendall-$\tau$ among the structural ablations. The two weak-teacher controls reveal an equally important point. A uniformly weak teacher has about twice the common bias of the strong teacher, yet it preserves most of the ranking because its differential bias remains small. A selectively weak teacher has almost the same common bias, but its larger differential bias reduces Kendall-$\tau$ to $0.41$. In practice, teacher suitability should therefore be judged across architecture families rather than by average teacher accuracy alone.
		
		The fidelity curve in Fig.~\ref{fig:surrogate-fidelity} shows when the teacher-guided signal is most useful. Early in the search, the GP archive contains few fully evaluated architectures, so TGL provides the stronger guidance. The gap narrows as the archive grows and the GP improves. On KWS, this behaviour is accompanied by the best mean HV and GD over five seeds. SA-NSGA-II still provides the best spread, and MOBO briefly leads in the representative single-run trace. TGL should therefore be viewed as improving the reliability and convergence of the front, not as guaranteeing the best value in every generation or for every metric. BirdCLEF provides the stronger evidence for constraint handling. TGL reaches a feasible rate close to one and records the lowest mean FPR, which is particularly important for continuous acoustic monitoring.
		
		These improvements come with a clear computational trade-off. TGL is slower than MOBO and surrogate-only SA-NSGA-II because every candidate receives a KD-Lite evaluation. It is nevertheless $2.2\times$ faster than full NSGA-II. The additional computation is accompanied by better fixed-set ranking and a stronger combination of HV and GD. Table~\ref{tab:rank} also shows why search performance and deployment choice should be treated separately. Balanced scalarization favours MOBO on GSC and SA-NSGA-II on BirdCLEF. The most accurate BirdCLEF model with the lowest FPR comes from TGL, but it also requires more memory.
		
		This interpretation identifies both the strengths of the method and the conditions under which they hold. The remaining limitations are discussed next.
		
		\subsection{Limitations}
		
		The theoretical guarantees apply to a fixed candidate population and should not be read as an end-to-end convergence theorem. NSGA-II crowding distance depends on numerical spacing, so the ranking theory applies more directly to dominance than to value-sensitive diversity selection. The short-training analysis also relies on the linearized regime and the margin-transfer condition in Assumption~\ref{as:bias}. These are modelling assumptions rather than universal properties of deep networks.
		
		The method also requires a pretrained teacher and a small pilot set. The selectively weak teacher experiment shows that architecture-dependent mismatch remains a genuine failure mode. In addition, the empirical study covers only two audio tasks, so transfer to other modalities remains to be established. The plots across generations are representative single-run diagnostics and should be interpreted together with the five-seed tables rather than as stand-alone statistical evidence. Finally, wall-clock time and model size cannot replace direct measurement of on-device latency and energy. The scope of the paper is therefore a population-level ranking principle demonstrated within a constrained evolutionary search. Improvements over the complete search are empirical outcomes rather than theoretical guarantees.
		
		\section{Conclusion}
		
		This work began with a simple observation: an expensive evolutionary search often needs reliable comparisons more than exact low-fidelity fitness values. TGL-NSGA-II addresses this need by combining KD-Lite, a short teacher-guided procedure for low-fidelity fitness estimation, with an independent stratified evaluation set and a GP surrogate. The analysis separates systematic bias from sampling noise and connects both quantities to pairwise inversion, expected Kendall-$\tau$, and first-front identification for a fixed population.
		
		The experimental results follow the same chain. On GSC and BirdCLEF, the measured Kendall-$\tau$ values of $0.74$ and $0.62$ exceed the predicted lower bounds of $0.60$ and $0.46$. Joint stratification reduces score variance by $41\%$. The teacher controls further show that differential bias is more damaging to candidate ordering than a similar amount of common bias, with selective mismatch reducing Kendall-$\tau$ to $0.41$. In the complete search, TGL provides useful guidance during surrogate cold start, reaches a feasible rate close to one and the lowest mean FPR on BirdCLEF, and obtains the largest mean HV and smallest mean GD on KWS. It is also $2.2\times$ faster than full NSGA-II, although MOBO and SA-NSGA-II remain faster and may provide preferable individual architectures under particular deployment weights.
		
		The conclusion is therefore specific rather than universal. When sampling variance and architecture-dependent bias are controlled, rank-aware low-fidelity evaluation can make expensive constrained evolutionary search more reliable. It improves the quality of candidate comparisons and can translate that improvement into a stronger feasible front without requiring every candidate to be fully trained.

		\bibliographystyle{IEEEtran}
		\bibliography{refs_tgl_verified_new}
	}
\end{document}